\newtheorem{Sup}{\bf Assumption}
\newtheorem{theorem}{\bf Theorem}
\newcommand{\bxx}{\mathbf{x}}
\newcommand{\bx}{\mathbf{X}}
\newcommand{\bX}{\mathbf{X}}
\newcommand{\bY}{\mathbf{Y}}
\newcommand{\bD}{\mathbf{D}}
\newcommand{\bC}{\mathbf{C}}
\newcommand{\bQ}{\mathbf{Q}}
\newcommand{\bS}{\mathbf{S}}
\newcommand{\bs}{\mathbf{s}}
\newcommand{\be}{\mathbf{e}}
\newcommand{\bq}{\mathbf{q}}
\newcommand{\bc}{\mathbf{c}}
\newcommand{\balpha}{\bm{\alpha}}
\newcommand{\bxi}{\bm{\xi}}
\newcommand{\br}{\mathbf{r}}
\newcommand{\mP}{\mathcal{P}}
\newcommand{\mX}{\mathcal{X}}
\newcommand{\mY}{\mathcal{Y}}
\newcommand{\mN}{\mathcal{N}}
\newcommand{\bmu}{\bm{\mu}}
\newcommand{\bSigma}{\bm{\Sigma}}
\newcommand{\red}{\textcolor{black}}
\newcommand{\blue}{\textcolor{black}}
\begin{document}

\title{\LARGE \bf
Resource-Efficient Cooperative Online Scalar Field Mapping via Distributed Sparse Gaussian Process Regression
}

\author{Tianyi Ding, Ronghao Zheng$^{\dag}$, Senlin Zhang, Meiqin Liu 
\thanks{Manuscript received: September 28, 2023; Revised: December 7, 2023; Accepted: January 8, 2024.}
\thanks{This paper was recommended for publication by
Editor Javier Civera upon evaluation of the Associate Editor and Reviewers’
comments. }
 \thanks{The authors Tianyi Ding, Ronghao Zheng, Senlin Zhang and Meiqin Liu are with the College of Electrical Engineering, Zhejiang University, Hangzhou 310027,  China. Senlin Zhang is also with the Jinhua Institute of Zhejiang University, Jinhua 321036, China. Meiqin Liu is also with the National Key Laboratory of Human-Machine Hybrid Augmented Intelligence, Xi’an Jiaotong University, Xi’an 710049, China. All authors are also with the National Key Laboratory of Industrial Control Technology, Zhejiang University, Hangzhou 310027, China. Emails: {\tt\small \{ty\_ding, rzheng, slzhang, liumeiqin\}@zju.edu.cn}}
	\thanks{$^{\dag}$Corresponding author}
	\thanks{This work was supported by the National Natural Science Foundation of China under Grant 62173294, the Zhejiang Provincial Natural Science Foundation of China under Grant LZ24F030001, and the National Natural Science Foundation of China under Grant U23B2060.}
 \thanks{Digital Object Identifier (DOI): see top of this page.}}

\markboth{IEEE Robotics and Automation Letters. Preprint Version. Accepted January, 2024}
{Ding \MakeLowercase{\textit{et al.}}: Resource-Efficient Cooperative Online Scalar Field Mapping via Distributed Sparse Gaussian Process Regression}

\maketitle

\begin{abstract}
	Cooperative online scalar field mapping is an important task for multi-robot systems. Gaussian process regression is widely used to construct a map that represents spatial information with confidence intervals. However, it is difficult to handle cooperative online mapping tasks because of its high computation and communication costs. This letter proposes a resource-efficient cooperative online field mapping method via distributed sparse Gaussian process regression. A novel distributed online Gaussian process evaluation method is developed such that robots can cooperatively evaluate and find observations of sufficient global utility to reduce computation. The error bounds of distributed aggregation results are guaranteed theoretically, and the performances of the proposed algorithms are validated by real online light field mapping experiments.
\end{abstract}

\begin{IEEEkeywords}
Multi-Robot Systems; Distributed Robot Systems; Mapping
\end{IEEEkeywords}
\section{INTRODUCTION}


\red{Online mapping in unknown environments is a critical mission in environmental monitoring and exploration applications. \blue{Compared with static sensor networks \cite{singh_model_2010}, \cite{han_coverage_2018}, multi-robot systems have higher exploration efficiency and flexibility for large-scale cooperative online mapping \cite{10038280}, \cite{newaz_decentralized_2023}.} Each robot moves and measures data locally in fields and then shares information with neighbors to construct the global scalar field map. }

\red{Gaussian process regression (GPR) is a framework for nonlinear non-parametric Bayesian inference widely used in field mapping \cite{9340836}, \cite{9981548}, \cite{jin_adaptive-resolution_2022}. Different from the traditional discrete occupancy grid model \cite{lan_learn_2017}, \cite{hornung_octomap_2013}, GPR predicts a continuous function to learn the dependencies between points in the fields and represent the map. Another benefit of GPR mapping is the explicit uncertainty representation, which can be used for efficient path planning. }

Unfortunately, the standard GPR requires $\mathcal{O}(N^3)$ time complexity with the training sample size $N$, which is not suitable for large-scale online mapping due to the high cost in computation, communication, and memory \cite{liu_when_2020}. In this letter, we consider cooperative online scalar field mapping for distributed multi-robot systems and propose a resource-efficient distributed sparse Gaussian process regression method to solve these challenges.

\begin{figure}
	\centering
	\includegraphics[width=5.5cm,height=3.cm,trim=0 0 0 0]{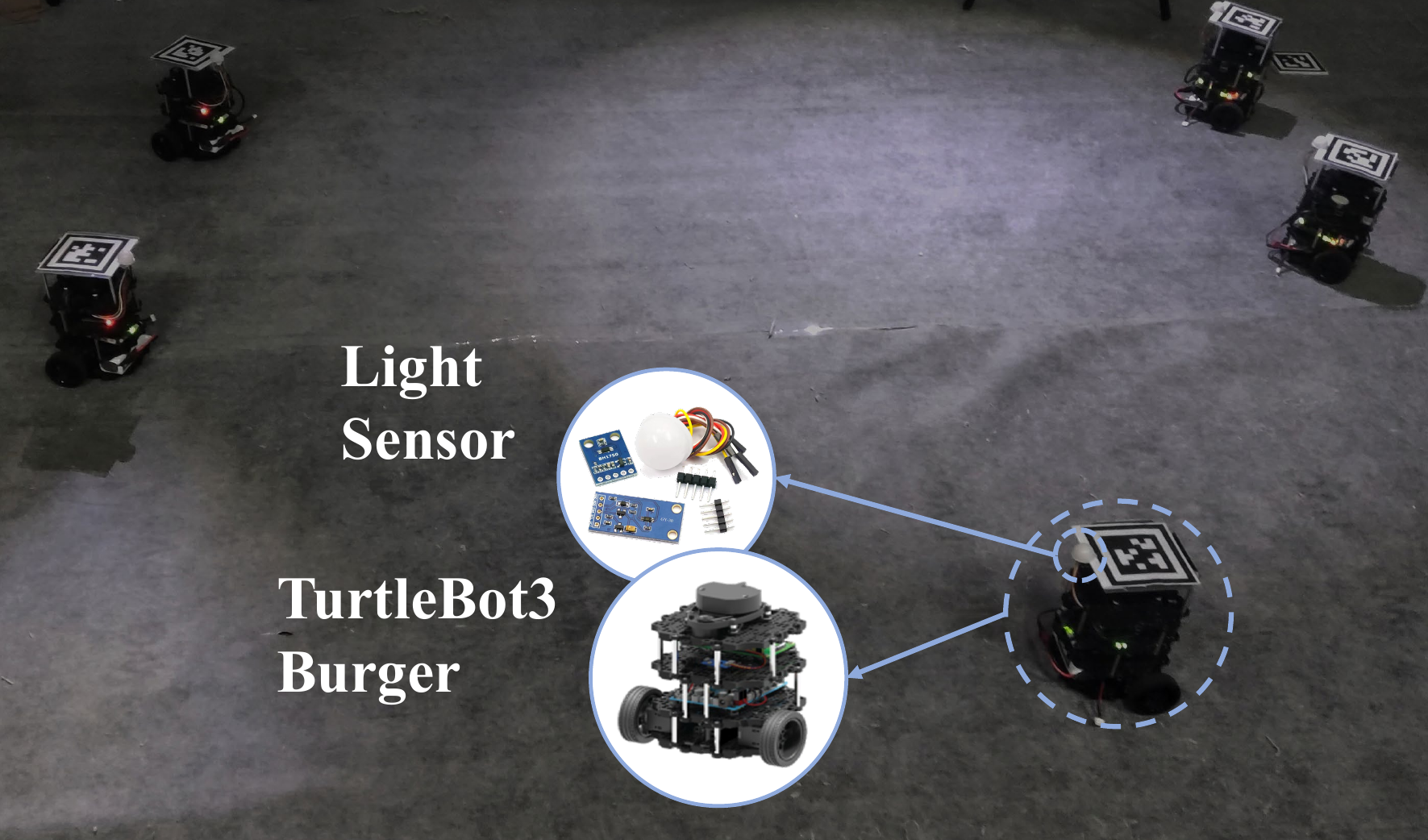}	
        \includegraphics[width=5.5cm,height=3.3cm,trim=0.2cm 0 0.15cm 0]{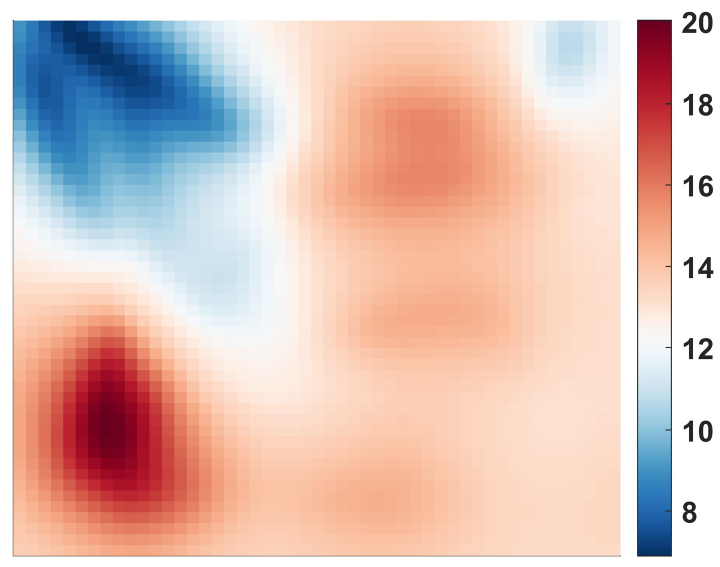}
	\caption{\blue{Light field mapping experiments in a $7.5~\mathrm{m} \times 5~\mathrm{m}$ workspace (top)} and distributed light field mapping results (bottom). The value is the light intensity (lx).}
	\label{fig1}
\end{figure}

\subsection{Related Work}

To speed up GPR, sparse variational GPR (SVGPR) is developed \cite{titsias_variational_2009}, \cite{burt_rates_2019}, which employs a set of $M \ll N$ pseudo-points based on KL divergence to summarize the $N$ sample data thereby reducing computational costs to $\mathcal{O}(NM^2)$. \red{Meanwhile, many Gaussian process aggregation methods are proposed for parallel computation, like product-of-experts (PoE) \cite{pmlr-v37-deisenroth15} and robust Bayesian committee machine \cite{pmlr-v80-liu18a}.} Then, SVGPR combined parallel computation \cite{pmlr-v48-hoang16} and stochastic optimization \cite{pmlr-v37-hoang15} methods are developed to further reduce computational costs. Moreover, Refs. \cite{csato_sparse_2002} and \cite{bui_streaming_2017} provide recursive frameworks for standard GPR and sparse variational GPR to handle online streaming data. Ref. \cite{wilcox_solar-gp_2020} combine these sparse variational, mini-batches and streaming methods and apply them to Bayesian robot model learning.
\red{However, these methods both require a central node, in which SVGPR \cite{titsias_variational_2009,burt_rates_2019,csato_sparse_2002,bui_streaming_2017,wilcox_solar-gp_2020} requires global information of the observations for sparse approximations, and the parallel methods \cite{pmlr-v37-deisenroth15,pmlr-v80-liu18a,pmlr-v48-hoang16,pmlr-v37-hoang15} require a central node for model aggregations.} Therefore, these methods cannot be easily extended to distributed multi-robot systems since all observations and models are required to communicate across networks, which increases inter-robot communications and causes repeated computations about the same observations on different robots.

For dynamic networks without a center, distributed Gaussian process aggregation methods based on average consensus are proposed in \cite{yuan_communication-aware_2020}, \cite{lederer_cooperative_2023} and \cite{9561566}. Then, Ref. \cite{jang_multi-robot_2020} applies an online distributed Gaussian process method to scalar field mapping. These methods avoid communicating observations on networks. However, these methods do not provide error bounds with centralized aggregation methods and do not consider the sparse computation, causing $\mathcal{O}(N^3)$ computational costs. Ref. \cite{kepler_approach_2020} provides a local online Gaussian process evaluation method to reduce computation and inter-robot communications by only computing and communicating partial observations. Ref. \cite{zobeidi_dense_2022} employs the SVGPR for multi-robot systems, in which each robot obtains sparse subsets based on its local observations. \red{Unfortunately, these approaches have limited global accuracy because the online evaluation only uses local information.}

\subsection{Main Contributions}
This letter proposes a distributed sparse online Gaussian process regression method for cooperative online scalar field mapping. The main contributions of this letter are as follows.
\begin{itemize}
	\item Propose a distributed recursive Gaussian process based on dynamic average consensus for streaming data and cooperative online mapping. The error bounds between distributed consensus results and the centralized results are guaranteed theoretically (Theorem~1). 
	\item Propose a novel distributed online evaluation method for Gaussian process. \red{Compared with \cite{kepler_approach_2020} and \cite{zobeidi_dense_2022}, each robot online evaluates observations based on distributed consensus results instead of only local information to obtain better global accuracy without additional communications.}
	\item Develop a resource-efficient cooperative online scalar field mapping method via
	distributed sparse Gaussian process regression. The performances of the proposed algorithms are evaluated by real online light field mapping experiments.
\end{itemize}

\section{Preliminaries And System Modeling}


\subsection{Problem Statement}

\textbf{(Network Model)} Consider a team of $p$ robots, labeled by $i \in V = \{1, . . . , p\}$. The communication networks at time $t$ can be represented by a directed graph $\mathcal{G}(t) = (V , E(t))$ with an edge set $E(t) \subset V \times V$. We consider that $(i, j) \in E(t)$ if and only if node $i$ communicates to node $j$ at time $t$. Define the set of the neighbors of robot $i$ at time $t$ as $\mathbb{N}_{i,t} =\{j\in V|(i,j)\in E(t)\}$. The matrix $A(t):=[a_{ij,t}]_{i,j=1}^{p,p} \in \mathbb{R}^{p\times p}$ represents the adjacency matrix of $\mathcal{G}(t)$.

\textbf{(Observation Model)} Each robot $i$ independently senses the fields $f:\mX \rightarrow \mY$ and gets the measurements $y_{i,t} \in \mY$ with zero-mean Gaussian noise $e_{i,t}$ at time $t$, where $\mX \subset \mathbb{R}^{n_\bxx}$ is the input feature space for $f$ and $\mY \subset \mathbb{R}$ is the corresponding output space. \red{This letter considers 2D cases ($n_\bxx=2$) and it can also be used in 3D cases as \cite{zobeidi_dense_2022}}. The observation model is given by
\begin{equation*}
	y_{i,t} = f\left(\bxx_{i,t}\right) + e_{i,t},~e_{i,t}\sim\mN(0,\sigma_e^2),
\end{equation*}
where $\bxx_{i,t} \in \mX$ is the position of robot $i$ at time $t$. Let the local datasets of robot $i$ at time $t$ be in the form $\bD_{i,t}:=\{\bX_{i,t},\bY_{i,t}\}$, where $\bX_{i,t} = \{\bxx_{i,k}\}_{k=1}^t,~\bY_{i,t} = \{y_{i,k}\}_{k=1}^t$ are the sets of observations from begin to current time $t$.


\subsection{Gaussian Process Regression}


Let the unknown fields $f:\mX \rightarrow \mY$ be the target regression functions.
A robot can use Gaussian process regression \cite{GPForML} to predict the latent function value $f(\bxx^*)$ at the test data points $\bxx^* \in \mX$ using the training datasets $\bD = \{\bX, \bY \}$. 

In GPR, the Gram function $K(\bx,\bx):=[\kappa(\bxx_i,\bxx_j)]_{i,j=1}^{N,N}$ is constructed from the kernel function $\kappa:\mX \times \mX \rightarrow \mathbb{R}$. \red{A commonly used kernel function is the squared exponential (SE) kernel defined as}
\begin{equation} \label{eq:kernel}
	\kappa(\bxx_i,\bxx_j) = \sigma_f^2 \exp\left(-\frac{\Vert \bxx_i - \bxx_j\Vert^2}{\bSigma_\eta^2}\right)
\end{equation}
with hyper-parameters $\{\sigma_f,\bSigma_\eta\}$. Then, the posterior distribution is derived as $\rho(\bxx^*):=\mP(f(\bxx^*)|\bX,\bY,\bxx^*) \sim \mN \left(\bmu(\bxx^*),\bSigma(\bxx^*)\right)$, where
\begin{equation} \label{LGPR}
	\begin{aligned}
		\bmu(\bxx^*) =  &K(\bxx^*,\bx) [K(\bx,\bx)+\sigma_e^2\mathbf{I}]^{-1}\bY, \\
		\bSigma(\bxx^*) = &K(\bxx^*,\bxx^*)\\ &-K(\bxx^*,\bx)[K(\bx,\bx)+\sigma_e^2\mathbf{I}]^{-1}K(\bx,\bxx^*).
	\end{aligned}
\end{equation}
In this letter, we assume the hyper-parameters can be typically selected by maximizing the marginal log-likelihood offline \cite{GPForML} and focus on GP predictions \eqref{LGPR} for field mapping, like Refs. \cite{yuan_communication-aware_2020}, \cite{lederer_cooperative_2023}, \cite{jang_multi-robot_2020}, \cite{kepler_approach_2020} and \cite{zobeidi_dense_2022}.

\subsection{Recursive Gaussian process update for streaming data}
In the online mapping mission, the data arrives sequentially. At any time $t$, the new data point $\{\bxx_{t},y_{t}\}$ is added to datasets $\bD_{t} = [\bD_{t-1}; \bxx_{t},y_{t}]$ in streaming set. \red{In this case, the Gaussian process prediction \eqref{LGPR} will suffer from slow updating.} To speed up the online mapping, we employ the recursive Gaussian process update \cite{csato_sparse_2002},
\begin{equation} \label{RLGPR}
	\begin{aligned}
		\bmu(\bxx^*) &=  K(\bxx^*,\bx_t)\balpha_t, \\
		\bSigma(\bxx^*) &= K(\bxx^*,\bxx^*)+K(\bxx^*,\bx_t)\bC_t K(\bx_t,\bxx^*),
	\end{aligned}
\end{equation}
where $\balpha_t \in \mathbb{R}^{t}$ and $\bC_t \in \mathbb{R}^{t \times t}$ are recursive updated variables. Suppose a new data point $\{\bxx_{t+1},y_{t+1}\}$ has been collected, the recursive updates are designed as,
\begin{equation} \label{re_update}
\begin{aligned}
	&\balpha_{t+1} = \begin{bmatrix}	\balpha_t \\ 0	\end{bmatrix} + q_{t+1}\bs_{t+1}, \\
	&\bC_{t+1} = \begin{bmatrix}	\bC_t & \mathbf{0}_t \\ \mathbf{0}^T_t & 0	\end{bmatrix} + r_{t+1} \bs_{t+1} \bs^T_{t+1}, \\
	&\bQ_{t+1} = \begin{bmatrix}	\bQ_t & \mathbf{0}_t \\ \mathbf{0}^T_t & 0	\end{bmatrix} + \gamma_{t+1} \be_{t+1} \be^T_{t+1}, \\
	&\bs_{t+1} = \begin{bmatrix}	\bC_tK(\bx_t,\bxx_{t+1}) \\ 1	\end{bmatrix},~
	\be_{t+1} = \begin{bmatrix}	\bQ_tK(\bx_t,\bxx_{t+1}) \\ -1	\end{bmatrix}
\end{aligned}
\end{equation}
with the scalars $q_{t+1}$, $r_{t+1}$ and $\gamma_{t+1}$ given by
\begin{equation*}
    q_{t+1} =  \frac{y_{t+1}- K(\bxx_{t+1},\bx_{t}) \balpha_t}{\sigma_e^2 + \kappa(\bxx_{t+1},\bxx_{t+1}) + K(\bxx_{t+1},\bx_{t}) \bC_t K(\bx_{t},\bxx_{t+1})}, 
\end{equation*}
\begin{equation*}
    r_{t+1} =  \frac{-1}{\sigma_e^2 + \kappa(\bxx_{t+1},\bxx_{t+1}) + K(\bxx_{t+1},\bx_{t}) \bC_t K(\bx_{t},\bxx_{t+1})},
\end{equation*}
\begin{equation} \label{eq:q,r}
    \gamma_{t+1} =  \frac{1}{\kappa(\bxx_{t+1},\bxx_{t+1}) - K(\bxx_{t+1},\bx_{t}) \bQ_t K(\bx_{t},\bxx_{t+1})}.
\end{equation}
Upon comparison with \eqref{RLGPR}, $1/\gamma_{t+1}$ can be interpreted as the recursive predictive variance of the new data point $\bxx_{t+1}$ given the observations at $\bD_t$ without noise. This property and the variable $\bQ$ will be used in Section~IV for sparse approximation.

\section{Consensus-based Distributed Gaussian Process Regression}
In this section, we introduce the distributed Gaussian process regression methods for cooperative online field mapping. Each robot constructs an individual field map and aggregates the global map in a distributed fashion using dynamic average consensus methods. 


\subsection{Consensus-based distributed Gaussian posterior update}

At time $t$, each robot $i$ first samples a new training pair $\{\bxx_{i,t},y_{i,t}\}$ in the fields. \red{Then, robots recursive update local Gaussian process posterior mapping $\rho_{i,t}^{[L]}(\bxx^*)$ by \eqref{RLGPR} and \eqref{re_update}, where $\bxx^*$ are the test map grid positions.} With some abuse of notations, denote $\rho_{i,t}^{[L]}$ and $\rho_{i,t}^{[D]}$ as the local and distributed Gaussian posterior mapping of robot $i$ at time $t$, respectively.

\red{Next, we employ discrete-time dynamic average consensus methods \cite{zhu_discrete-time_2010} for distributed map aggregation.} The first-order iterative aggregation process can be written as
\begin{equation} \label{eq:DGPR}
	\begin{aligned}
		\bxi_{i,t+1} &= \bxi_{i,t} + \sum_{j \in \mathbb{N}_{i,t}} a_{ij,t} \left(\bxi_{j,t} - \bxi_{i,t} \right) + \Delta \br_{i,t}, \\
		\bxi_{i,0} &= \br_{i,0},~ \Delta \br_{i,0} = \mathbf{0},
	\end{aligned}
\end{equation} 
where $\bxi_{i,t}$ denotes the local consensus state of robot $i$ at time $t$ and $\Delta \br_{i,t} = \br_{i,t} - \br_{i,t-1}$ is the reference input, which determines the converged consensus results. \red{Many aggregation schemes can be used to design this reference input. We employ the PoE method in this letter.} The distributed Gaussian process update can be designed as
\begin{equation} \label{dist_update}
	\br_{i,t} = \begin{bmatrix}
		\bmu_{i,t}^{[L]} \cdot  (\hat{\bSigma}_{i,t}^{[L]})^{-1} \\  (\hat{\bSigma}_{i,t}^{[L]})^{-1}
	\end{bmatrix},~
	\begin{bmatrix}
		\bmu_{i,t}^{[D]} \\  \bSigma_{i,t}^{[D]}
	\end{bmatrix} = 
	\begin{bmatrix}
		\bxi_{i,t+1}^{[1]} \cdot ({\bxi_{i,t+1}^{[2]}})^{-1} \\  ({\bxi_{i,t+1}^{[2]}})^{-1}
	\end{bmatrix},
\end{equation}
where $\bxi_{i,t+1}^{[k]}$ denotes the $k$-th element of the local consensus state and $\hat{\bSigma}_{i,t}^{[L]}:= \bSigma_{i,t}^{[L]} + \sigma_n^2$ is the local Gaussian posterior variance with the correction biases $\sigma_n^2$ for avoiding singularity. Due to the theoretical foundation of consensus algorithms, the distributed aggregation mean $\bmu_{i,t}^{[D]}$ will converge to the centralized PoE results
\begin{equation} \label{agg_result}
	\begin{aligned}
		\tilde{\bmu} = \sum_{i=1}^p \frac{ (\hat{\bSigma}_{i,t}^{[L]})^{-1} } {\sum_i^p (\hat{\bSigma}_{i,t}^{[L]})^{-1} } \bmu_{i,t}^{[L]},
	\end{aligned}
\end{equation} 
which will be discussed latter. In particular, robots are only required to communicate the local consensus state $\bxi_{i,t}$ with constant size, instead of the online datasets $\bD_{i,t}$ with growing size.

\subsection{Error bounds derivation for distributed fusion}
For the derivation of distributed aggregation error bounds, we introduce the following assumptions.

\begin{Sup} \label{Sup:PSC}
	(Periodical Strong Connectivity). There is some positive integer $B \geq 1$ such that, for all time instant $t \geq 1$, the directed graph $(V,E(t) \cup E(t+1) \cup ... \cup E(t+B-1))$ is strongly connected.
\end{Sup}
\begin{Sup} \label{Sup:Non-degeneracy}
	(Non-degeneracy and Doubly Stochastic). There exists a constant $\varphi>0$ such that $a_{ii,t} = 1 - \sum_{j \in \mathbb{N}_{i,t}} a_{ij,t} \geq \varphi$, and $a_{ij,t}~(j \in \mathbb{N}_{i,t})$ satisfies $a_{ij,t} \in \{0\} \cup [\varphi, 1],~\forall t \geq 1$. And there hold that $\mathbf{1}^TA(t) = \mathbf{1}^T$ and $A(t)\mathbf{1} = \mathbf{1},~\forall t \geq 1$.
\end{Sup}

The Assumption~\ref{Sup:PSC} describes the periodical strong connectivity between robots and Assumption~\ref{Sup:Non-degeneracy} defines the parameters selection of adjacency matrix, which are common for average consensus methods \cite{zhu_discrete-time_2010}.

\begin{Sup} \label{Sup:obs_bound}
	(Bounded Observations). There exists time invariant constants $\bar{y}$ and $\bar{\mu}$ such that all observations $y_{i,t}$ and local Gaussian process predictions $\bmu^{[L]}_{i,t}(\bxx^*)$ satisfy $\vert y_{i,t} \vert \leq \bar{y}$,~$\vert \bmu^{[L]}_{i,t}(\bxx^*) \vert \leq \bar{\mu}$, $~\forall t \geq 1,~\forall i \in V,~\forall \bxx^* \in \mX$.
\end{Sup}


The average consensus methods generally suppose the derivative of the consensus state is bounded \cite{yuan_communication-aware_2020}, \cite{lederer_cooperative_2023} and \cite{zhu_discrete-time_2010}. The derivative bounds are related to the consensus errors and can be used for parameter selections of the communication network in practice, like communication periods and adjacency matrix. \red{However, the derivative bound is hard to get by sample if the field distribution is unknown in practice.} This letter employs the character of the recursive Gaussian process to loosen this requirement to the bounded observations (Assumptions~\ref{Sup:obs_bound}) for easier parameter selections on online mapping. Moreover, since the Gaussian process predictions are determined by the observations and the prior kernel function, it is bounded as the observations.

\begin{theorem} \label{error bound}
	Suppose Assumptions \ref{Sup:PSC}~$\sim$~\ref{Sup:obs_bound} hold and each robot $i$ runs recursive Gaussian process \eqref{RLGPR} and first-order consensus algorithms \eqref{eq:DGPR}. Let $\hat{\delta}_1$,~$\hat{\delta}_2$ and $\eta$ be positive constants, where
	\begin{equation}
		\begin{aligned}
			&\hat{\delta}_1 = \frac{\bar{y} \sigma_f^2(\sigma_n^2+\sigma_f^2)} {\sigma_n^2(\sigma_e^2+\sigma_f^2)} + \frac{\bar{\mu} \sigma_f^4}{\sigma_n^2(\sigma_e^2+\sigma_f^2)}, \\
			&\hat{\delta}_2 = \frac{ \sigma_f^4}{\sigma_n^2(\sigma_e^2+\sigma_f^2)},\\
			&\eta = \frac{4(pB-1)}{\varphi^{0.5p(p+1)B-1}}.
		\end{aligned}
	\end{equation}
	Select the correction term $\sigma_n^2$ to satisfy $\sigma_n^2 \geq \frac{\eta \sigma_f^4}{\sigma_e^2+\sigma_f^2}$. Then, the error bounds between $\bmu_{i,t}^{[D]}(\bxx^*)$ and PoE results $\tilde{\bmu}(\bxx^*)$ \eqref{agg_result} at any test point $\bxx^*$ will converge to
	\begin{equation}
		\lim\limits_{t\rightarrow \infty} \left\lvert \bmu_{i,t}^{[D]}(\bxx^*) - \tilde{\bmu}(\bxx^*)  \right\rvert \leq  \alpha +\beta \tilde{\bmu}(\bxx^*),~\forall i \in V,~ \bxx^* \in \mX,
	\end{equation}
	where 
	\begin{equation}
		\alpha = \frac{\eta \hat{\delta}_1 }{1 + \eta \hat{\delta}_2 }, ~
		\beta = \left\lvert \frac{ \eta \hat{\delta}_2 }{1 - \eta \hat{\delta}_2 } \right\rvert.
	\end{equation}
	Furthermore, if there is a constant $h>0$, $\Delta\br_{i,t+1} = \mathbf{0}$ for any time $t>h$ holds, the error bounds will converge to zero, i.e., $\bmu_{i,t}^{[D]}(\bxx^*) \rightarrow \tilde{\bmu}(\bxx^*),~\forall i \in V,~ \bxx^* \in \mX$ as time $t \rightarrow \infty$.
\end{theorem}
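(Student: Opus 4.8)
The plan is to reduce the claim to a tracking-error estimate for the dynamic average consensus recursion \eqref{eq:DGPR} and then propagate that estimate through the product-of-experts ratio in \eqref{dist_update}. First I would use that $A(t)$ is column stochastic (Assumption~\ref{Sup:Non-degeneracy}): summing \eqref{eq:DGPR} over $i$ and telescoping the reference increments $\Delta\br_{i,t}$ gives $\sum_{j=1}^{p}\bxi_{j,t+1}=\sum_{j=1}^{p}\br_{j,t}$ for every $t$, so the common target of the consensus states is the average $\bar{\br}_t:=\tfrac{1}{p}\sum_{j}\br_{j,t}$, whose two blocks are $\tfrac1p\sum_j\bmu_{j,t}^{[L]}(\hat{\bSigma}_{j,t}^{[L]})^{-1}$ and $\tfrac1p\sum_j(\hat{\bSigma}_{j,t}^{[L]})^{-1}$; their ratio is exactly the centralized PoE mean $\tilde{\bmu}$ in \eqref{agg_result} (the $1/p$ cancels). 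Writing $\epsilon_{i,t}^{[k]}:=\bxi_{i,t+1}^{[k]}-\bar{\br}_t^{[k]}$ for the disagreement of robot $i$, a one-line manipulation of \eqref{dist_update} yields the identity $\bmu_{i,t}^{[D]}-\tilde{\bmu}=(\epsilon_{i,t}^{[1]}-\tilde{\bmu}\,\epsilon_{i,t}^{[2]})/(\bar{\br}_t^{[2]}+\epsilon_{i,t}^{[2]})$, so the proof splits into (i) bounding $|\epsilon_{i,t}^{[k]}|$ and (ii) keeping the denominator away from zero.

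For (i), the classical estimates for first-order discrete-time dynamic average consensus (\cite{zhu_discrete-time_2010}) bound the ultimate disagreement by a connectivity constant times $\sup_{\tau}\max_j\Vert\Delta\br_{j,\tau}\Vert$; under Assumptions~\ref{Sup:PSC}--\ref{Sup:Non-degeneracy} the backward products of the weight matrices contract geometrically on $\{x:\mathbf{1}^{T}x=0\}$, and summing the resulting geometric series produces the constant $\eta$. The non-standard ingredient is to replace the customary ``bounded reference derivative'' hypothesis by Assumption~\ref{Sup:obs_bound} using the recursive structure \eqref{RLGPR}--\eqref{eq:q,r}: each new streaming sample perturbs the local posterior by a $t$-uniform amount. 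From \eqref{RLGPR} the variance step is $\bSigma_{i,t-1}^{[L]}(\bxx^*)-\bSigma_{i,t}^{[L]}(\bxx^*)=|r_{i,t}|\,\sigma_{*,t}^{2}\ge 0$, where $\sigma_{*,t}=K(\bxx^*,\bx_{t})\bs_{i,t}$ is the posterior cross-covariance with the new input (here $1/\gamma_{i,t}$ of \eqref{eq:q,r} is the noise-free recursive predictive variance); Cauchy--Schwarz gives $\sigma_{*,t}^{2}\le\bSigma_{i,t-1}^{[L]}(\bxx^*)\,\bSigma_{i,t-1}^{[L]}(\bxx_{i,t})$, and for the SE kernel every posterior variance lies in $[0,\sigma_f^{2}]$, so the step is at most $\sigma_f^{4}/(\sigma_e^{2}+\sigma_f^{2})$; dividing by $\sigma_n^{2}\le\hat{\bSigma}_{i,t}^{[L]}\le\sigma_n^{2}+\sigma_f^{2}$ converts this into a bound on $|\Delta\br_{i,t}^{[2]}|$. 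For the first block I would split $\Delta(\bmu^{[L]}(\hat{\bSigma}^{[L]})^{-1})$ into a mean-change term $|\bmu_{i,t}^{[L]}-\bmu_{i,t-1}^{[L]}|\,(\hat{\bSigma}_{i,t}^{[L]})^{-1}$ and the variance-change term just bounded; the mean change equals $|q_{i,t}|\,|\sigma_{*,t}|$, and since the numerator of $q_{i,t}$ in \eqref{eq:q,r} is $y_{i,t}-\bmu_{i,t-1}^{[L]}(\bxx_{i,t})$ (at most $\bar{y}+\bar{\mu}$ by Assumption~\ref{Sup:obs_bound}) while its denominator is at least $\sigma_e^{2}$, collecting the kernel and variance constants gives the bound $\hat{\delta}_1$.

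Assembling: the per-step bounds combined with the consensus estimate yield $\limsup_{t\to\infty}|\epsilon_{i,t}^{[k]}|\le\eta\hat{\delta}_k$ (with the normalization absorbed into the denominator), while $\bar{\br}_t^{[2]}\ge(\sigma_n^{2}+\sigma_f^{2})^{-1}$ together with the choice $\sigma_n^{2}\ge\eta\sigma_f^{4}/(\sigma_e^{2}+\sigma_f^{2})$ --- equivalently $\eta\hat{\delta}_2\le 1$ --- keeps $\bar{\br}_t^{[2]}+\epsilon_{i,t}^{[2]}$ uniformly positive; substituting into the identity of the first step and taking $\limsup$ gives $|\bmu_{i,t}^{[D]}-\tilde{\bmu}|\le\alpha+\beta\tilde{\bmu}$ with the stated $\alpha,\beta$. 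For the last assertion, if $\Delta\br_{i,t+1}=\mathbf{0}$ for all $t>h$, then the reference inputs, hence $\bar{\br}_t$ and $\tilde{\bmu}$, are frozen after time $h$, and \eqref{eq:DGPR} reduces to a plain average-consensus recursion whose weight-matrix products converge to $\tfrac1p\mathbf{1}\mathbf{1}^{T}$ under Assumptions~\ref{Sup:PSC}--\ref{Sup:Non-degeneracy}; therefore $\bxi_{i,t+1}\to\bar{\br}_h$ and $\bmu_{i,t}^{[D]}\to\tilde{\bmu}$.

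I expect the main obstacle to be the middle step: extracting $t$-uniform constants $\hat{\delta}_1,\hat{\delta}_2$ for how far a single streaming observation can move the local GP mean and inverse variance using only $|y_{i,t}|\le\bar{y}$ and $|\bmu_{i,t}^{[L]}|\le\bar{\mu}$ --- rather than a derivative bound on the unknown field --- which is what forces the careful bookkeeping with \eqref{re_update}--\eqref{eq:q,r}, the monotonicity of the recursive posterior variance, and the Cauchy--Schwarz bound on $\sigma_{*,t}$. The remaining pieces are either standard consensus machinery (the contraction estimate behind $\eta$, cited from \cite{zhu_discrete-time_2010}) or the elementary algebra of the reduction and the final substitution.
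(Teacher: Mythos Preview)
Your proposal is correct and follows essentially the same route as the paper: bound the per-step reference increments $\Delta\br_{i,t}^{[k]}$ using the recursive GP update \eqref{re_update}--\eqref{eq:q,r} together with the variance range $\sigma_n^2\le\hat{\bSigma}_{i,t}^{[L]}\le\sigma_n^2+\sigma_f^2$, feed these into the FODAC tracking estimate of \cite{zhu_discrete-time_2010} to obtain $|\epsilon_{i,t}^{[k]}|\le\eta\hat{\delta}_k$, and then push the consensus errors through the ratio identity $\bmu_{i,t}^{[D]}-\tilde{\bmu}=(\epsilon^{[1]}-\tilde{\bmu}\,\epsilon^{[2]})/(\bar{\br}^{[2]}+\epsilon^{[2]})$; the last assertion is Corollary~3.1 of \cite{zhu_discrete-time_2010}.

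The one noticeable difference is in how the one-step mean change $|\bmu_{i,t+1}^{[L]}-\bmu_{i,t}^{[L]}|$ is bounded. The paper does not use your Cauchy--Schwarz estimate on $\sigma_{*,t}$; instead it argues that the extremal configuration is $\bxx_{i,t+1}=\bxx^*$ with all earlier inputs far away, which yields the sharper bound $\tfrac{\sigma_f^{2}}{\sigma_e^{2}+\sigma_f^{2}}\,|y_{i,t+1}|\le\tfrac{\sigma_f^{2}}{\sigma_e^{2}+\sigma_f^{2}}\,\bar{y}$ (only $\bar{y}$, not $\bar{y}+\bar{\mu}$). That is exactly what produces the first summand of $\hat{\delta}_1$ as stated. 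Your route via $|q_{i,t}|\,|\sigma_{*,t}|$ with numerator $\le\bar{y}+\bar{\mu}$ and denominator $\ge\sigma_e^{2}$ would give a bound of the same form but with a different constant, so if you want to land precisely on the theorem's $\hat{\delta}_1$ you should either adopt the paper's extremal-configuration argument or reorganize your Cauchy--Schwarz bound so that the $\bar{\mu}$ contribution is absorbed entirely into the variance-change term (as in the paper's decomposition $\Delta\br^{[1]}=\big[(\bmu_{t+1}^{[L]}-\bmu_t^{[L]})\hat{\bSigma}_t^{[L]}-\bmu_t^{[L]}(\hat{\bSigma}_{t+1}^{[L]}-\hat{\bSigma}_t^{[L]})\big]/(\hat{\bSigma}_{t+1}^{[L]}\hat{\bSigma}_t^{[L]})$). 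Everything else---the positivity of the denominator from $\sigma_n^2\ge\eta\sigma_f^4/(\sigma_e^2+\sigma_f^2)$, the final algebra, and the convergence when $\Delta\br_{i,t}$ eventually vanishes---matches the paper.
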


\begin{proof}
	The proof sketch is in the Appendix.
\end{proof}

\blue{The error bounds given in Theorem~1 do not require the specific GP kernels and hyper-parameters and thus can be used in other distributed GP systems for communication parameter selections.}

\section{Distributed sparse Gaussian process approximation}
To speed up the GPR for online mapping, it is necessary to design Gaussian process sparse approximation methods. In this section, 
a novel distributed metric is proposed for data evaluations such that predictions have better global performance without additional communication costs.

\begin{figure*}
	\centering
	\subfloat[Centralized Compression]{\includegraphics[width=5.6cm,height=4cm,trim=1.3cm 0 1cm 0.7cm, clip]{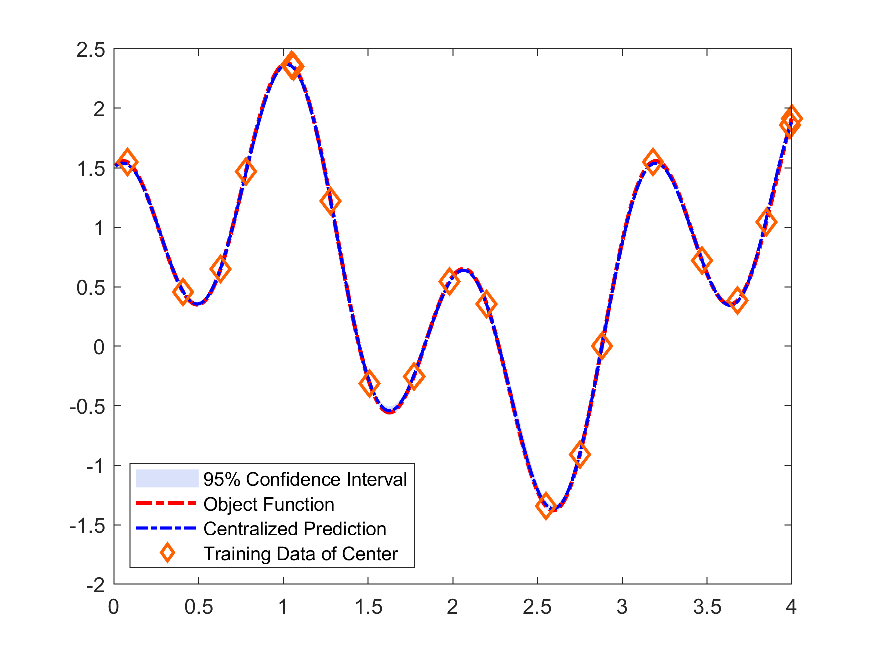}
	}
	\subfloat[Local Compression \cite{kepler_approach_2020}]{
		\begin{overpic}[width=5.6cm,height=4cm,trim=1.3cm 0 1cm 0.7cm, clip]{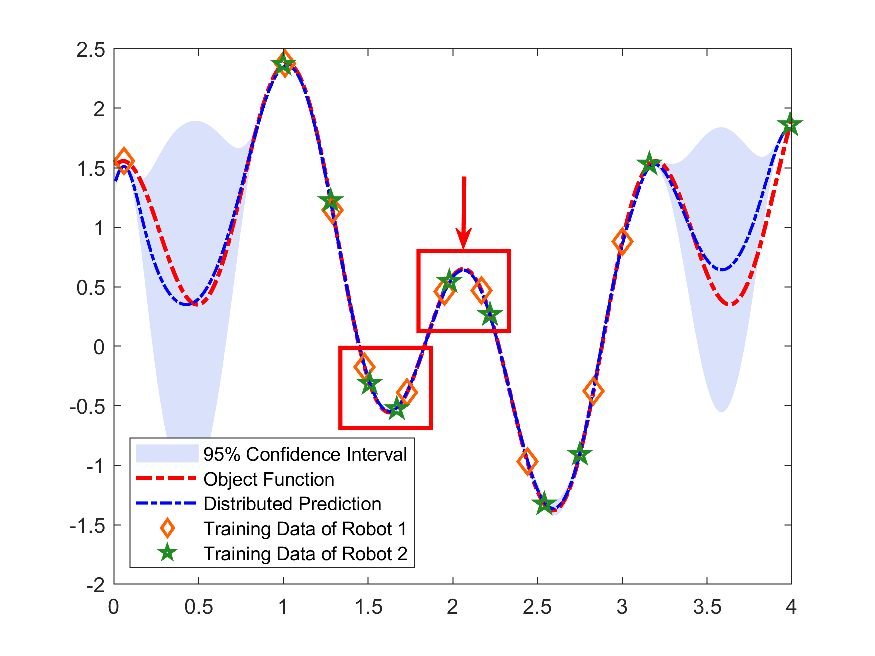}
			\put(34,60){\small \color[rgb]{0,0,0}{Repeated Information}} 
		\end{overpic}
	}
	\subfloat[Distributed Compression]{\includegraphics[width=5.6cm,height=4cm,trim=1.3cm 0 1cm 0.7cm, clip]{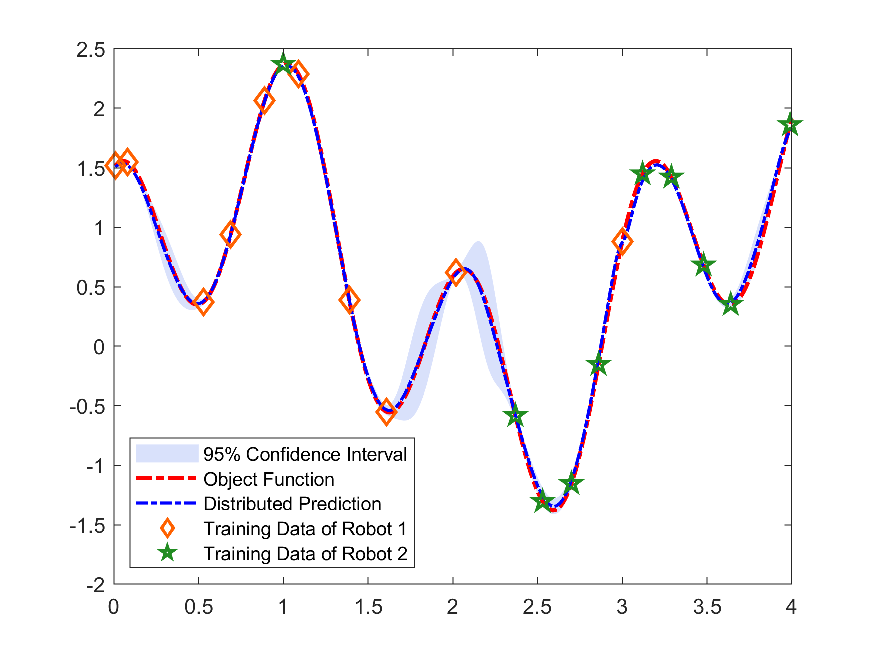}
	}
	\caption{A 1-D toy example with different compression methods. There are 2 robots sampling and predicting the objective function, respectively, where Robot~1 samples points ($N=300$) from $x_{1,0}=3$ to $x_{1,300}=0$ and Robot~2 samples from $x_{2,0}=1$ to $x_{2,300}=4$. The shaded area represents $95\%$ of the distributed GP confidence intervals. The diamond and star represent the sparse training data ($M=10$) after the online compression. The toy objective function is $f(x) = \sin(2x)+\cos(6x)+0.5$. (a) The centralized compression requires all real-time sample data. (b) Each robot compresses sample data locally, causing information repeated and globally inefficient. (c) Each robot compresses sample data based on distributed predictions with better performance. }
	\label{fig:1D_example}
\end{figure*}

\subsection{Distributed sparse metrics for recursive Gaussian process}

For a robot $i$, the recursive variable scalar $q_{i,t+1}$ and $r_{i,t+1}$ \eqref{eq:q,r} for can be interpreted as the variance-weighted prediction at point $\bxx_{i,t+1}$ using datasets $\bD_{i,t}$. Ref. \cite{kepler_approach_2020} use the sum of the changes in posterior mean at all current sample points $\{\bxx_{i,1},...,\bxx_{i,t}\}$ to denote the score for the observation $\{\bxx_{i,t+1},~y_{i,t+1}\}$, which has been derived as
\begin{equation} \label{local score}
	\epsilon_{i,t+1} = \left\lvert\frac{ \balpha_{i,t+1}^{[t+1]}}{\mathbf{diag}(\bQ_{i,t+1})^{[t+1]}} \right\rvert,
\end{equation}
where $[t+1]$ denotes the $(t+1)$-th item of the vectors and $\mathbf{diag}(\cdot)$ denotes the diagonal vectors of the matrix. However, for a multi-robot system, each robot $i$ has different datasets $\bD_{i,t}$ and recursive Gaussian process variables $\balpha_{i,t},~\bC_{i,t}$. \red{The score \eqref{local score} can only evaluate the points based on local information, which causes repeated information and limited global performances.} A 1-D toy example is shown in Fig.~\ref{fig:1D_example}. Two robots sample the 1-D toy function and compress observations. Compression based on only local information tends to retain repeated information in common areas because of the same local metrics.

In Section~III, we have obtained the distributed aggregation results based on consensus methods. And then we employ the the distributed aggregation results $\rho_{i,t}^{[D]}$ to improve the compression performances.

Bhattacharyya-Riemannian distance is employed to evaluate the sample points using the distributed aggregation results. \red{For two Gaussian distributions $\nu_1(\mathbf{x}) = \mN(\bmu_1,\bSigma_1)$ and $\nu_2(\bxx) = \mN(\bmu_2,\bSigma_2)$ over feature space $\mX$, Bhattacharyya-Riemannian distance is defined as }
\begin{equation} \label{BR metric}
	\begin{aligned}
		&d_\mathcal{BR}(\nu_1,\nu_2) =  d_\mathcal{B}(\nu_1,\nu_2) + d_\mathcal{R}(\bSigma_1,\bSigma_2), \\
		&d_\mathcal{B}(\nu_1,\nu_2) = \sqrt{(\bmu_1-\bmu_2)^T \bar{\bSigma}^{-1} (\bmu_1-\bmu_2)},\\
		&d_\mathcal{R}(\bSigma_1,\bSigma_2) = \sqrt{\sum_{j=1}^{n_\lambda} (\log \lambda_j)^2 },
	\end{aligned}
\end{equation}
where $\bar{\bSigma} = \frac{1}{2} (\bSigma_1 + \bSigma_2)$, and $\{\lambda_1,...,\lambda_{n_\lambda}\}$ is the  eigenvalues of $\bSigma_2^{-1}\bSigma_1$. This metric \eqref{BR metric} modifies the Hellinger distance by employing Riemannian item $d_\mathcal{R}$, which is efficient for the Gaussian variance representation \cite{abou-moustafa_note_2012}.

Denote $\bS_{i,k}:= \{\bxx_{i,k},~y_{i,k}\} \subset \bD_{i,t}$ as the evaluated observation. 
We design a distributed sparse metrics as
\begin{equation} \label{metric}
	\phi(\bS_{i,k}) := k_{\phi} \Omega \left(- d_\mathcal{BR}(\rho_{i,t}^{[D]},\widetilde{\rho}_{i,t,k}^{[L]}) \right)  + (1-k_{\phi}) \Omega \left( \epsilon_{i,k} \right),
\end{equation} 
\begin{equation} \label{test_posterior}
	\widetilde{\rho}_{i,t,k}^{[L]} := \mP(f(\bxx^*)|\bD_{i,t} \setminus \bS_{i,k},\bxx^*),
\end{equation}
where $k_{\phi} \in (0,1)$ is a constant weight for the local score and distributed score and $\Omega(\cdot)$ is the normalization function for all evaluated observations. This letter uses the simple min-max normalization. $\epsilon_{i,k}$ denotes the local score \eqref{local score} for robot $i$ about the evaluated observation $\bS_{i,k}$. $\widetilde{\rho}_{i,t,k}^{[L]}$ is the local recursive Gaussian process distribution removing the evaluated observation, which will be introduced latter.

In Section~III, we have proved the distributed mean will converge to the centralized PoE results. Therefore, the distributed sparse metrics \eqref{metric} can find observations of sufficient global utility. 

\begin{algorithm} [t]
	\caption{Cooperative Online Filed Mapping Algorithm for Robot $i$} \label{alg:1}
	\begin{algorithmic}[1]
		\For{$t=1,2,...$}
		\State /* \textit{Sensing and Local Update} */
		\State Sample independent training point $\{\bxx_{i,t},y_{i,t}\}$
		\State $\bD_{i,t} = \{\bD_{i,t-1};(\bxx_{i,t},y_{i,t})\},~\tilde{M}_i = \textbf{Size}(\bD_{i,t})$
		\State Update local Gaussian process predictions \eqref{RLGPR}
		\begin{center}
			$\rho_{i,t}^{[L]} =\mP(f(\bxx^*)|\bD_{i,t},\bxx^*) = \mN(\bmu_{i,t}^{[L]},~\bSigma_{i,t}^{[L]})$
		\end{center}
		\State /* \textit{Communication and Achieving Consensus} */
		\State Receive $\bxi_{j,t}$ from neighbors $j \in \mathbb{N}_{i,t}$
		\State Compute local consensus state $\bxi_{i,t+1}$ \eqref{eq:DGPR}
		\State Update distributed Gaussian process predictions \eqref{dist_update} 
		\begin{center}
			$\rho_{i,t}^{[D]} = \mN(\bmu_{i,t}^{[D]},~\bSigma_{i,t}^{[D]})$
		\end{center} 
		\State Send $\bxi_{i,t+1}$ to neighbors $j \in \mathbb{N}_{i,t+1}$
		\State /* \textit{Distributed Data Compress} (Algorithm~2) */
		\If{DataSize  $\tilde{M}_i > M_i$}
		\State $\bD_{i,t} = \textbf{Compress}\left(\bD_{i,t},\rho_{i,t}^{[D]}\right)$ 
		\EndIf
		\EndFor
	\end{algorithmic}
\end{algorithm}

\subsection{Distributed sparse Gaussian process update}
Then, we employ the distributed sparse metrics \eqref{metric} to select the information-efficient subsets with $M$ points and design the distributed sparse Gaussian process.

Reorder the elements of ${\balpha}_{i,t},~\bC_{i,t},~\bQ_{i,t}$ so that they are consistent with $\bS_{i,k}$ being the last element in the current datasets $\bD_{i,t}$, and define
\begin{equation*}
    \tilde{\balpha}_{i,k} = \begin{bmatrix}
			\balpha_{i,t}^{[\tilde{k}]} \\ \balpha_{i,t}^{[k]}
    \end{bmatrix} = \begin{bmatrix}
        \tilde{\balpha}_{i,k}^{[1:M]} \\ \alpha^*_{i,k}
    \end{bmatrix}, \\
\end{equation*}
\begin{equation} 
	\begin{aligned}
		\tilde{\bC}_{i,k} &= \begin{bmatrix}
			\bC_{i,t}^{[\tilde{k},\tilde{k}]} & \bC_{i,t}^{[\tilde{k},k]} \\ \bC_{i,t}^{[k,\tilde{k}]} & \bC_{i,t}^{[k,k]}
		\end{bmatrix} = \begin{bmatrix}
			\tilde{\bC}_{i,k}^{[1:M,1:M]} & \bc_{i,k} \\ \bc^T_{i,k} & c^*_{i,k}
		\end{bmatrix}, \\
		\tilde{\bQ}_{i,k} &= \begin{bmatrix}
			\bQ_{i,t}^{[\tilde{k},\tilde{k}]} & \bQ_{i,t}^{[\tilde{k},k]} \\ \bQ_{i,t}^{[k,\tilde{k}]} & \bQ_{i,t}^{[k,k]}
		\end{bmatrix} = \begin{bmatrix}
			\tilde{\bQ}_{i,k}^{[1:M,1:M]} & \bq_{i,k} \\ \bq^T_{i,k} & q^*_{i,k}
		\end{bmatrix},
	\end{aligned}
\end{equation}
where $\bC^{[i,j]}$ denotes the elements of the matrix $\bC$ at $i$-th row and $j$-th column and $\tilde{k} = [1,\dots,k-1,k+1,\dots,M+1]$ denotes an index vector from $1$ to $M+1$ \red{except} $k$. The recursive variables removing points $\bS_{i,k}$ are given as \cite{csato_sparse_2002}
\begin{equation} \label{reduce}
	\begin{aligned}
		\hat{\balpha}_{i,k} &= \tilde{\balpha}_{i,k}^{[1:M]} - \frac{\alpha^*_{i,k}}{q^*_{i,k}} \bq_{i,k}, \\
		\hat{\bC}_{i,k} &= \tilde{\bC}_{i,k}^{[1:M,1:M]} + \frac{c^*_{i,k}}{(q^*_{i,k})^2} \bq_{i,k} \bq^T_{i,k} - \frac{1}{q^*_{i,k}}\mathbf{\Gamma}_{i,k}, \\
		\hat{\bQ}_{i,k} &= \tilde{\bQ}_{i,k}^{[1:M,1:M]} - \frac{1}{q^*_{i,k}} \bq_{i,k} \bq^T_{i,k},
	\end{aligned}
\end{equation}
where $\mathbf{\Gamma}_{i,k} = \bq_{i,k} \bc^T_{i,k} + \bc_{i,k} \bq^T_{i,k}$, so as to compute $\widetilde{\rho}_{i,t,k}^{[L]}$ using the reduced $\hat{\balpha}_{i,k},~\hat{\bC}_{i,k},~\hat{\bQ}_{i,k}$ by \eqref{RLGPR}.


\subsection{Resource-efficient cooperative online mapping algorithm and complexity analysis}

The entire cooperative online field mapping method is shown in Algorithm~1 and the pipline is presented in Fig.~\ref{fig:flowfigure}. At the time $t$, each robot samples at the field and updates local Gaussian process predictions $\rho_{i,t}^{[L]}$ by \eqref{RLGPR}. Next, robots share local consensus state $\bxi_{i,t}$ \eqref{eq:DGPR} with neighbors and update distributed Gaussian process predictions $\rho_{i,t}^{[D]}$ by \eqref{dist_update}. Moreover, robots use the distributed sparse metrics \eqref{metric} to greedily select the information-efficient subsets (Algorithm~2). The recursive variables $\balpha_{i,t},~\bC_{i,t},~\bQ_{i,t}$ keep lower $M$ dimensions by \eqref{reduce} for fast Gaussian process predictions.

\begin{algorithm}[t]
	\caption{Distributed Data Compress for Robot $i$} \label{alg:compress}
	\begin{algorithmic}[1]
		\For{$k = 1,...,M_i+1$}
		\State Compute test Gaussian posterior $\widetilde{\rho}_{i,t,k}^{[L]}$ \eqref{test_posterior} without point $\bS_{i,k} = \{\bxx_{i,k},y_{i,k}\}$
		\State Obtain distributed sparse metrics $\phi(\bS_{i,k})$ \eqref{metric}
		\EndFor
		\State Find minimal information sample index
		\begin{center}
			$k^* = \operatorname{argmin}_{k\in\bD_{i,t}}\phi^*_k$
		\end{center} 
		\State Remove points $\{\bxx_{i,k^*},y_{i,k^*}\}$ from datasets 
		\begin{center}
			$\bD_{i,t}= \bD_{i,t}\setminus \{\bxx_{i,k^*},y_{i,k^*}\}$
		\end{center} 
		\State Update recursive variables with $M$ dimensions \eqref{reduce}
		\begin{center}
			$\balpha_{i,t} = \hat{\balpha}_{i,k^*},~\bC_{i,t}= \hat{\bC}_{i,k^*},~\bQ_{i,t}= \hat{\bQ}_{i,k^*}$
		\end{center} 
		\State \Return $\bD_{i,t},~\balpha_{i,t},~\bC_{i,t},~\bQ_{i,t}$
	\end{algorithmic}
\end{algorithm}
A comparison of time, memory, and communication complexity is presented in Table~\ref{complex}. Recall that the robot number is $p$, and each robot samples $N$ training points and selects $M$ sparse points. \red{The numbers of test points $\bxx^*$ and communication edges are $\mathcal{T}$ and $|E|$}. 

\red{\textbf{Computation and memory:} for each robot, the recursive sparse update of local mapping yields $\mathcal{O}\left(N(M^2+M^3)\right)$ time and $\mathcal{O}(M^2)$ memory complexity, which is significantly less than distributed recursive GP with $\mathcal{O}(N^3)$ time and $\mathcal{O}(N^2)$ memory complexity. Compared with local sparse metrics \eqref{local score}, the proposed distributed sparse metrics \eqref{metric} have better global accuracy with additional $\mathcal{O}\left(NM^3\right)$ computation. Since $M \ll N$, the additional computation is acceptable for online mapping.}

\red{\textbf{Communication:} compared with communicating original observations $\mathcal{O}(p^2N)$ or compressed observations $\mathcal{O}(p^2M)$ \cite{kepler_approach_2020}, \cite{zobeidi_dense_2022}, the proposed methods only communicate consensus state $\bxi$ and have $\mathcal{O}(|E|\mathcal{T})$ communication complexity, where $p-1\leq |E| \leq p(p-1)/2$. Therefore, the proposed methods have lower communication costs for large-scale multi-robot systems where $p$ is large. Moreover, the proposed methods can avoid repeated computations based on the same observations for GP mapping in different robots.}

\begin{figure}[!t]
	\centering
	\includegraphics[scale=0.3,trim=50 30 0 0]{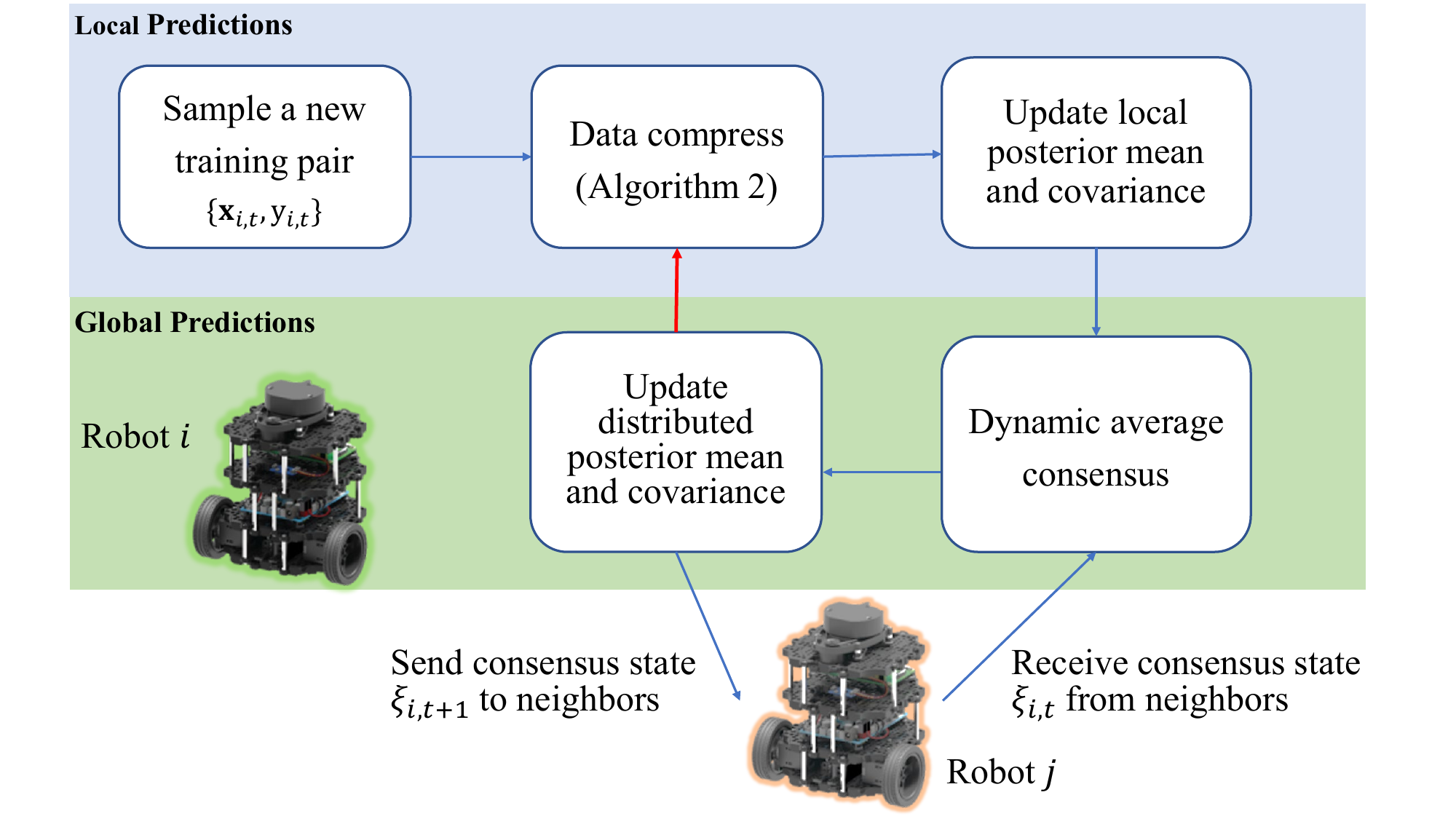}
	\caption{Pipeline of the proposed Algorithm~1. }
	\label{fig:flowfigure}
\end{figure}

\begin{table*} [t] 
	\centering 
	\caption{Algorithm complexity for $N$ streaming data and $p$ robots.}
	\label{complex}
	\renewcommand\arraystretch{1.5}
	\begin{tabular}{ccccccc}
		\hline
		\multirow{2}{*}{Complexity} & Full GPR & RGPR  & DGPR & DRGPR  & Local SGPR & \multirow{2}{*}{This letter} \\
		 & \cite{GPForML} & \cite{csato_sparse_2002} & \cite{yuan_communication-aware_2020}, \cite{lederer_cooperative_2023} & \cite{jang_multi-robot_2020} & \cite{kepler_approach_2020}, \cite{zobeidi_dense_2022} &  \\
		\hline
		\red{Computation} & $\mathcal{O}(p^4N^4)$ & $\mathcal{O}(p^3N^3)$ & $\mathcal{O}(pN^4)$ & $\mathcal{O}(pN^3)$  &$\mathcal{O}(p^3NM^2)$ & $\mathcal{O}\left(pN(M^2+M^3)\right)$ \\
		\hline
		\red{Memory} & $\mathcal{O}(p^2N^2)$ & $\mathcal{O}(p^2N^2)$ & $\mathcal{O}(pN^2)$ & $\mathcal{O}(pN^2)$ & $\mathcal{O}(p^3M^2)$ & $\mathcal{O}(pM^2)$ \\
		\hline
            \red{Communication} & \red{$\mathcal{O}(p^2N)$} & \red{$\mathcal{O}(p^2N)$} & \red{$\mathcal{O}(|E|\mathcal{T})$ } & \red{$\mathcal{O}(|E|\mathcal{T})$ } & \red{$\mathcal{O}(p^2M)$}  & \red{$\mathcal{O}(|E|\mathcal{T})$ }
	\end{tabular}
	\begin{tablenotes}
		\footnotesize
		\item[*] *Compared with local compression \eqref{local score}, the proposed method has better global accuracy with additional $\mathcal{O}\left(NM^3\right)$ computation for compression, which is acceptable since $M \ll N$.
	\end{tablenotes}
	
\end{table*}

\section{Experimental results}

In this section, the proposed algorithm is validated by experiments with multiple TurtleBot3-Burger robots. Each robot shown in Fig.~\ref{fig1} is equipped with the BH1750FVI light sensor, sampling and mapping the light field online in a $7.5~\mathrm{m} \times 5~\mathrm{m}$ workspace. \red{The sample rate of the light sensor is set as $30$~Hz.} The locations of robots are obtained by April-Tags. The entire system is implemented in the Robot Operating System (ROS).

In this experiment, 5 robots are performing an exploration to map the light field in the workspace. The communication network is dynamic and determined by the distance between robots. \red{The communication range and frequency between robots are set as $3$~m and $5$~Hz.} \blue{The local and distributed model update frequencies are set as $30$ and $5$~Hz, respectively}. The corresponding adjacency matrix of networks is $a_{ij,t} = 0.1~(j \in \mathbb{N}_{i,t})$, $a_{ii,t} = 1 - \sum_{j \in \mathbb{N}_{i,t}} a_{ij,t}$. This letter does not consider path planning and the sample paths of robots are set as a fixed prior. Gaussian process hyper-parameters are set $\sigma_f^2 = 1$, $\Sigma_\eta = [1/26~0;0~1/40]$. \red{The hyperparameters are typically selected by maximizing the marginal log-likelihood offline using $100$ samples as a priori knowledge}. Observation noises and correction biases are set $\sigma_e^2 =\sigma_n^2 = 0.1$. The distributed weight is set $k_\phi = 0.2$. All 5 robots sample a total of $N = 1056 \times 5$ training points.

\subsection{Convergence validation of distributed aggregation}
Select a $50 \times 50$ mesh grid in the workspace as the test points to construct the light field mapping. The size of sparse training subsets is set as $M = 13 \times 5$. The averaged distributed predictions of all 5 robots for all test points in the experiment and the centralized PoE results are shown in Fig.~\ref{fig:consensus error}. The communication network is dynamic and Robots $4$ and $5$ are out of communication ranges of Robots $1,~2$ and $3$ at first time. When sample number $n > 166$, Robots $4$ and $5$ begin sharing information in the communication network and the distributed aggregation results of all robots converge to the centralized PoE results.

\begin{figure} [!t]
	\centering
	\includegraphics[width=7.5cm,height=4.5cm,trim=0 0 0 20, clip]{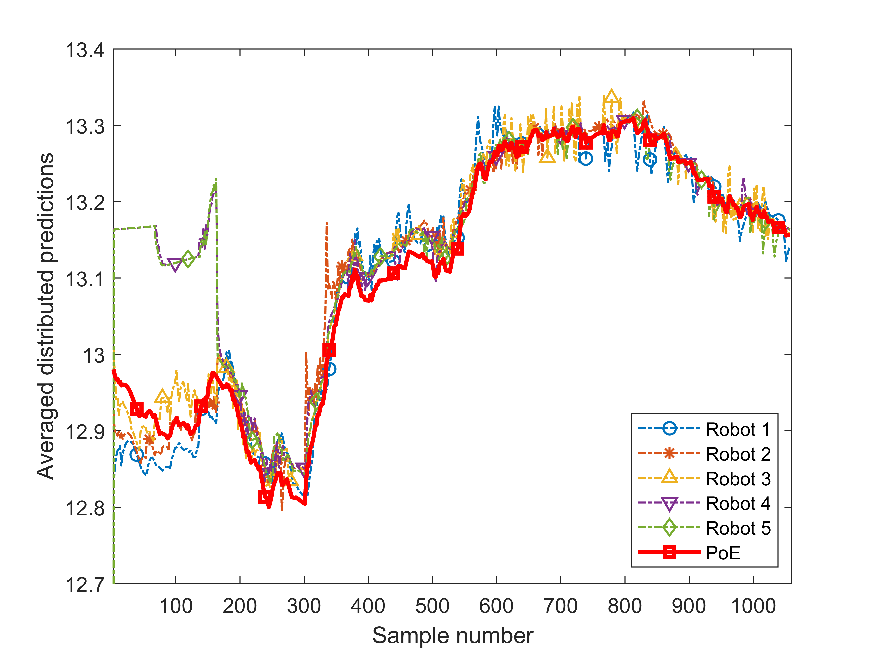}
	\caption{The averaged distributed predictions and centralized PoE results for all test points.}
	\label{fig:consensus error}
\end{figure}

Fig.~\ref{fig:field} represents the progress of this experiment and the corresponding distributed GP results of Robot $3$ during time $t=35$ ($n=1056$) seconds. Fig.~\ref{fig:field}(a) shows the experiment snapshots and the robot number. Fig.~\ref{fig:field}(b) shows the results of the distributed online GP variance predictions for Robot $3$ with the exploration trajectories of all robots. The black points denote the sparse subsets for GP predictions. Fig.~\ref{fig:field}(c) shows that the GP mean predictions for Robot $3$ gradually converges to the real light field in Fig.~\ref{fig:field}(a). Compared with the local GP results of Robot $3$ shown in Fig.~\ref{fig:Local results}, the distributed aggregation methods efficiently construct light field maps for unvisited areas.

\begin{figure*} [t]
	\flushright
	\subfloat{
		\begin{overpic}[width=4cm,height=3cm,trim=0 0 0 0, clip]{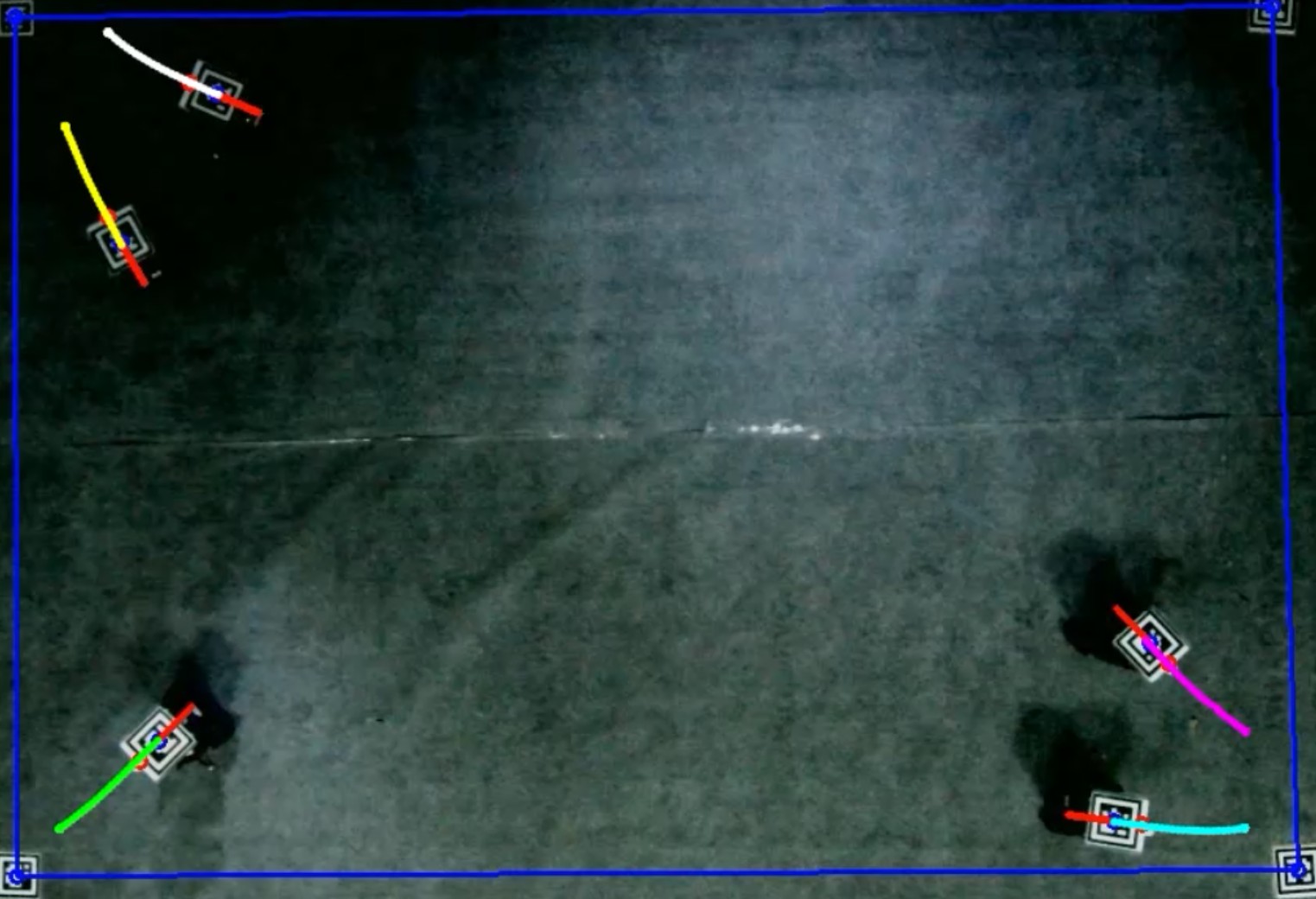}
			\put(-20,40){\color[rgb]{0,0,0}{(a)}} 
			\put(18,60){\color[rgb]{1,1,1}{R1}} 
			\put(8,44){\color[rgb]{1,1,1}{R2}} 
			\put(10,18){\color[rgb]{1,1,1}{R3}} 
			\put(72,10){\color[rgb]{1,1,1}{R4}} 
			\put(72,28){\color[rgb]{1,1,1}{R5}} 
		\end{overpic}
		\includegraphics[width=4.cm,height=3cm,trim=0 0 0 0, clip]{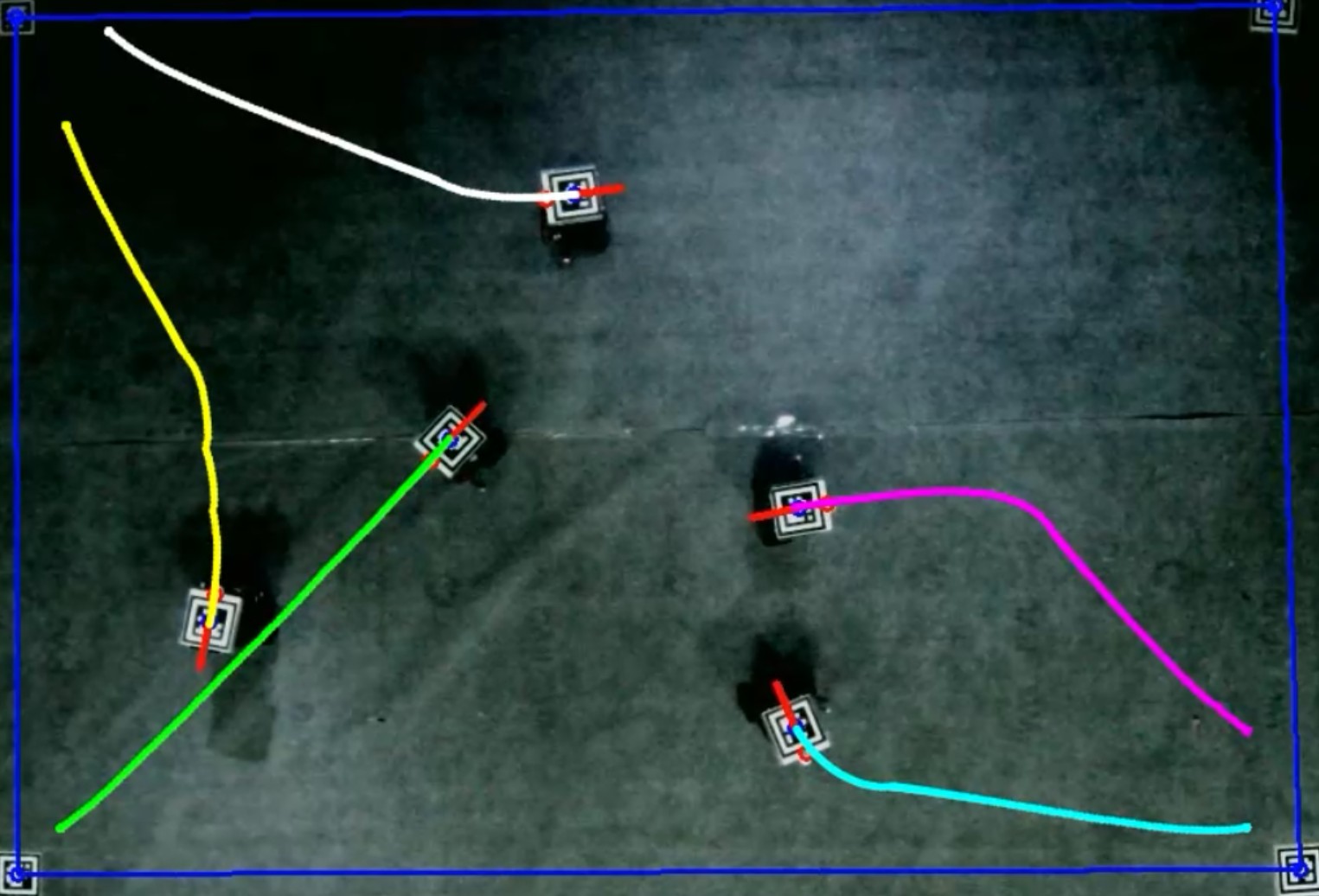}
		\includegraphics[width=4.cm,height=3cm,trim=0 0 0 0, clip]{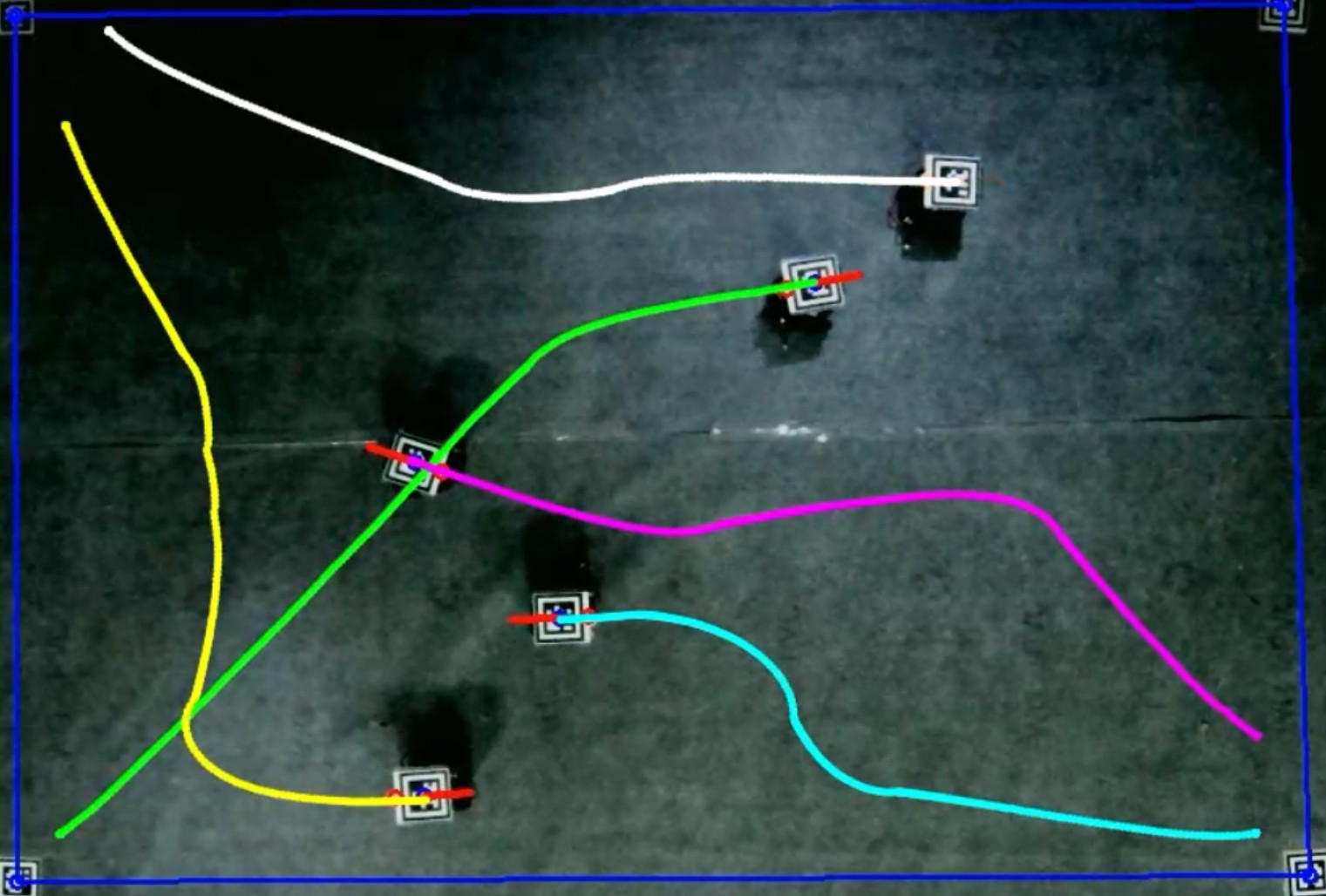}
		\includegraphics[width=4.cm,height=3cm,trim=0 0 0 0, clip]{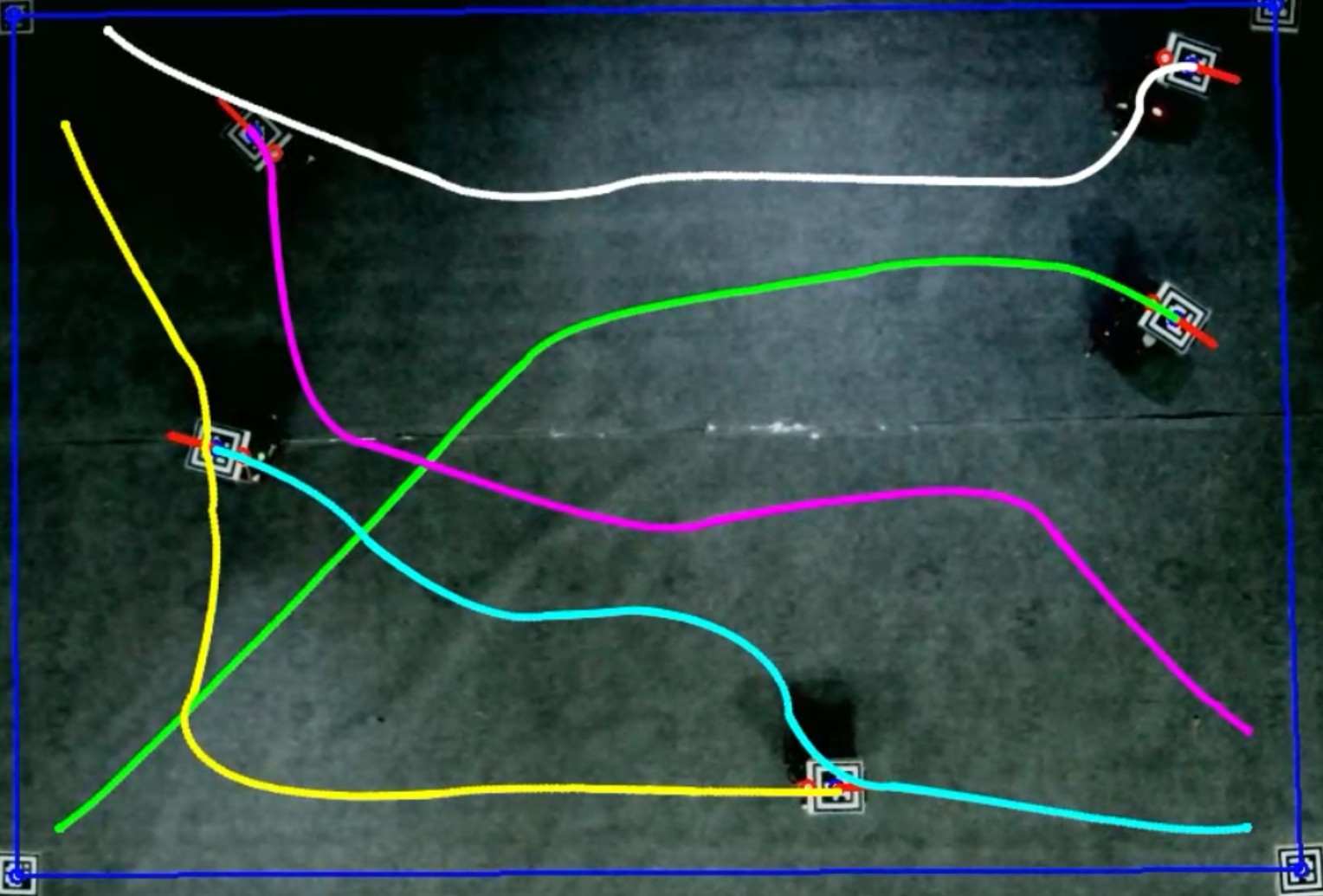}
		\includegraphics[width=0.36cm,height=3cm,trim=0 0 0.1cm 0, clip]{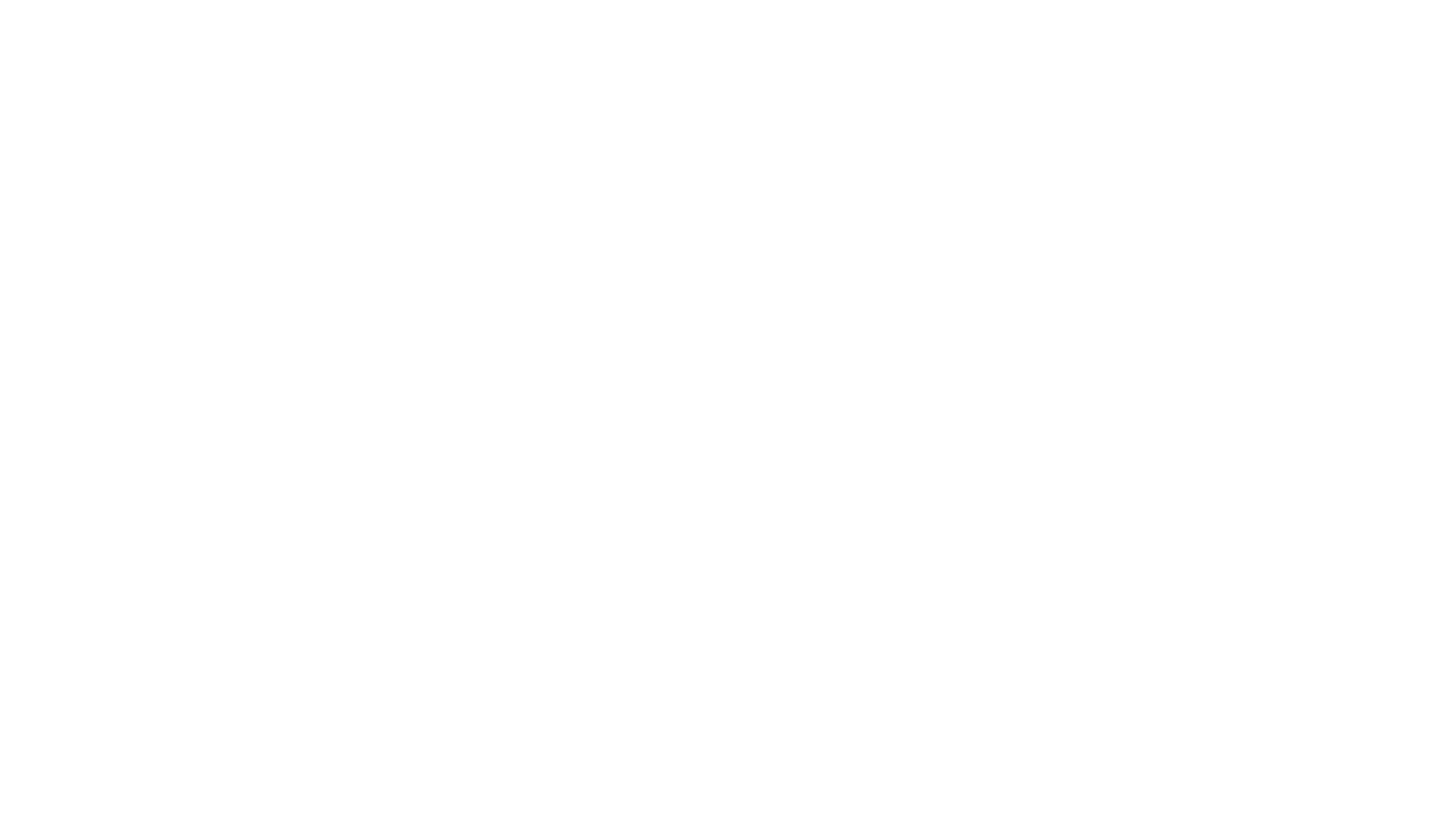}
	}  
	\\
	\vspace{-0.2cm}
	\subfloat{
		\begin{overpic}[width=4cm,height=3.5cm,trim=1.9cm 0 1.5cm 0.8cm, clip]{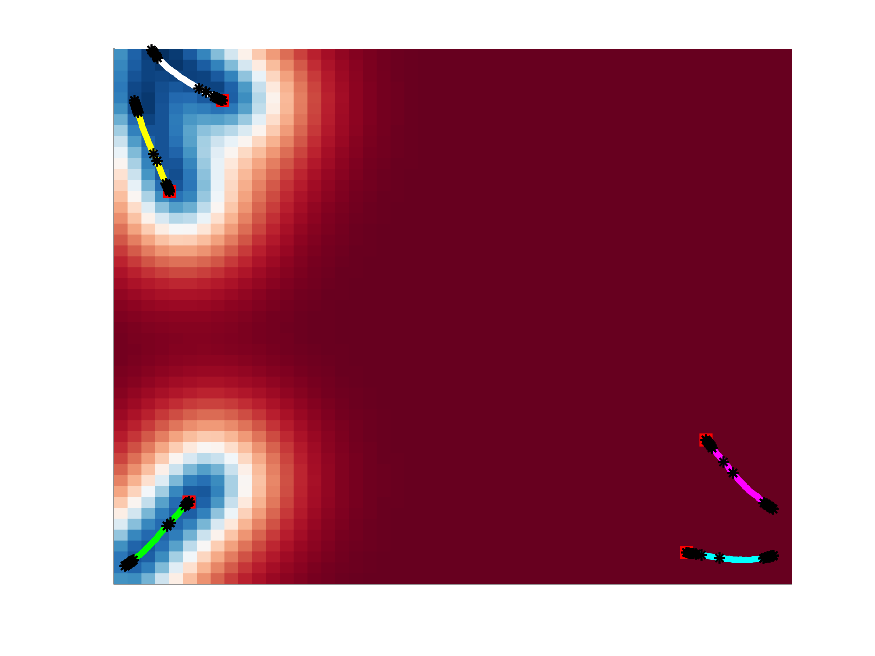}
			\put(-20,40){\color[rgb]{0,0,0}{(b)}} 
		\end{overpic}
		\includegraphics[width=4cm,height=3.5cm,trim=1.9cm 0 1.6cm 0.8cm, clip]{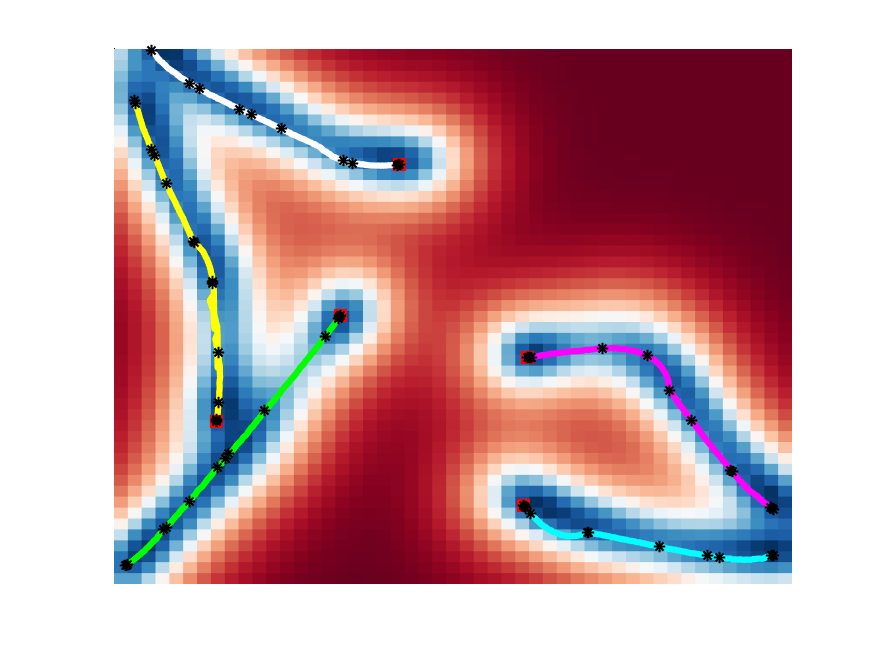}
		\includegraphics[width=4cm,height=3.5cm,trim=1.9cm 0 1.6cm 0.8cm, clip]{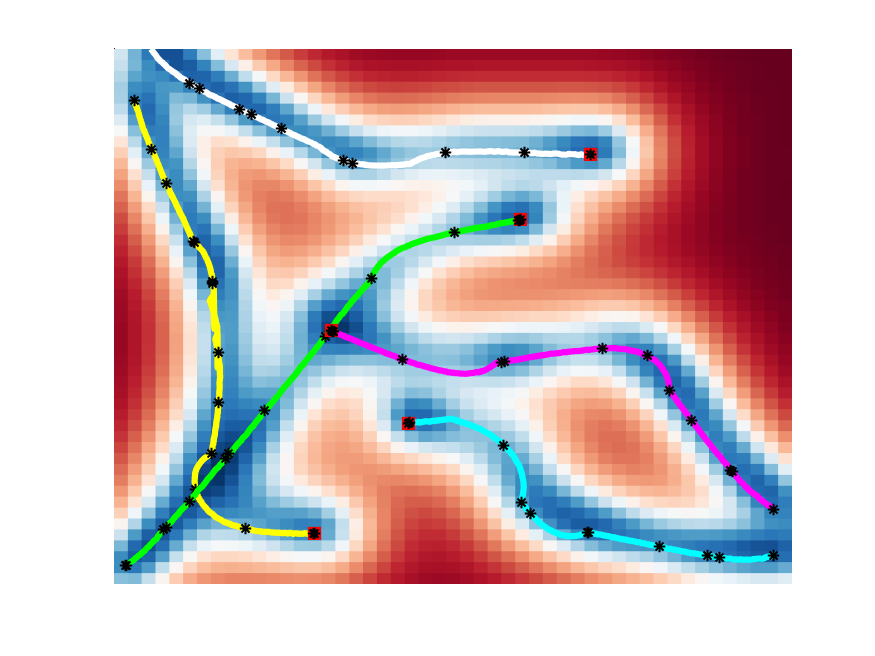}
		\includegraphics[width=3.88cm,height=3.5cm,trim=1.9cm 0 1.6cm 0.8cm, clip]{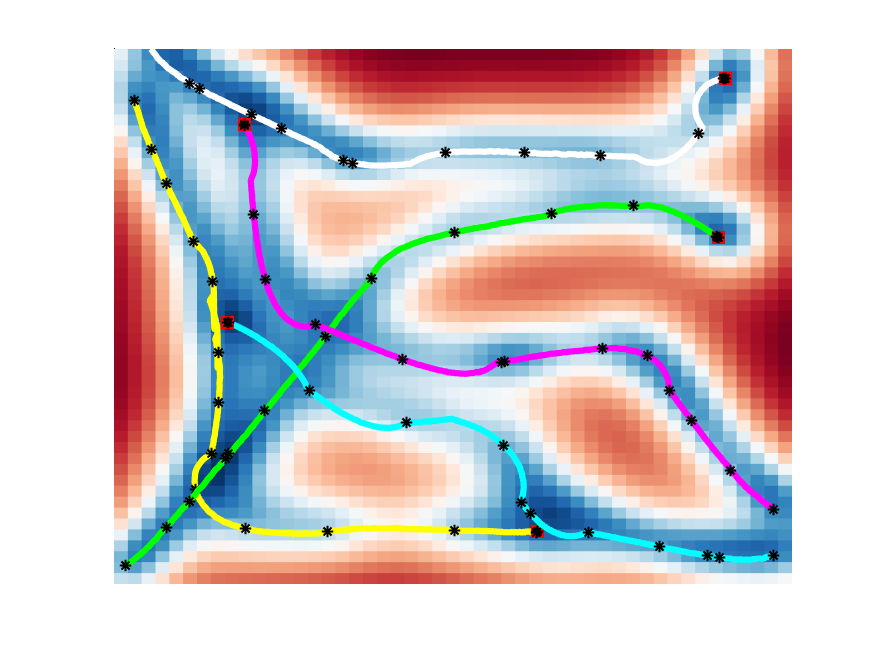}
		\includegraphics[width=0.5cm,height=3.5cm,trim=0.1cm 0 0cm 0.35cm, clip]{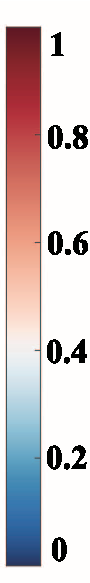}
	}  
	\\
	\vspace{-0.65cm}
	\subfloat{
		\begin{overpic}[width=4cm,height=3.5cm,trim=1.9cm 0 1.5cm 0.7cm, clip]{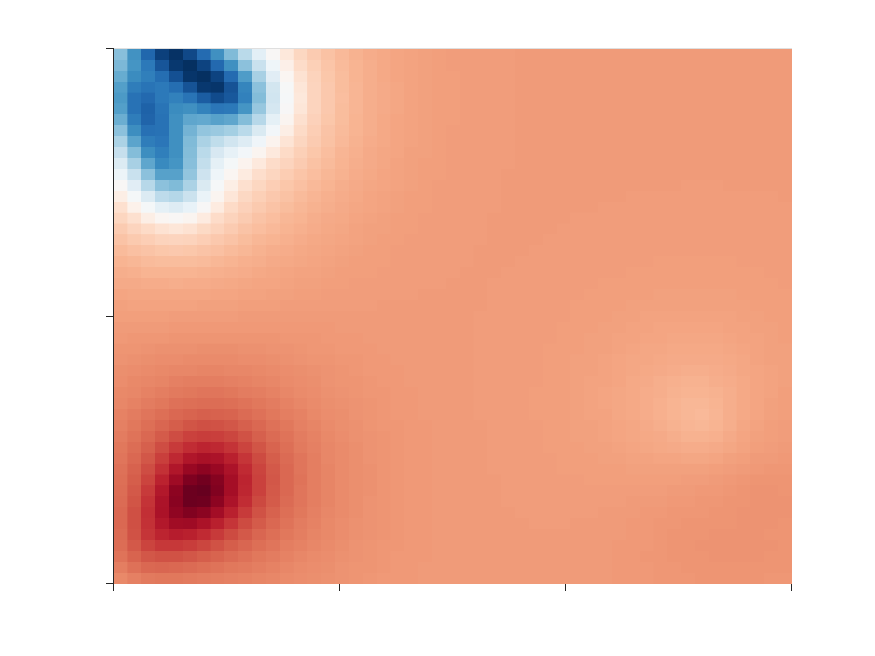}
			\put(-20,40){\color[rgb]{0,0,0}{(c)}} 
			\put(-18,55){y(m)}
			\put(45,-5){x(m)}
			
			\put(-6,82){\color[rgb]{0,0,0}{5}} 
			\put(-6,2){\color[rgb]{0,0,0}{0}} 
			\put(90,2){\color[rgb]{0,0,0}{7.5}} 
			\put(65,2){\color[rgb]{0,0,0}{5}} 
			\put(25,2){\color[rgb]{0,0,0}{2.5}} 
			\put(60,13){\color[rgb]{0,0,0}{$t=5s$}} 
		\end{overpic}
		\begin{overpic}[width=4cm,height=3.5cm,trim=1.9cm 0 1.6cm 0.7cm, clip]{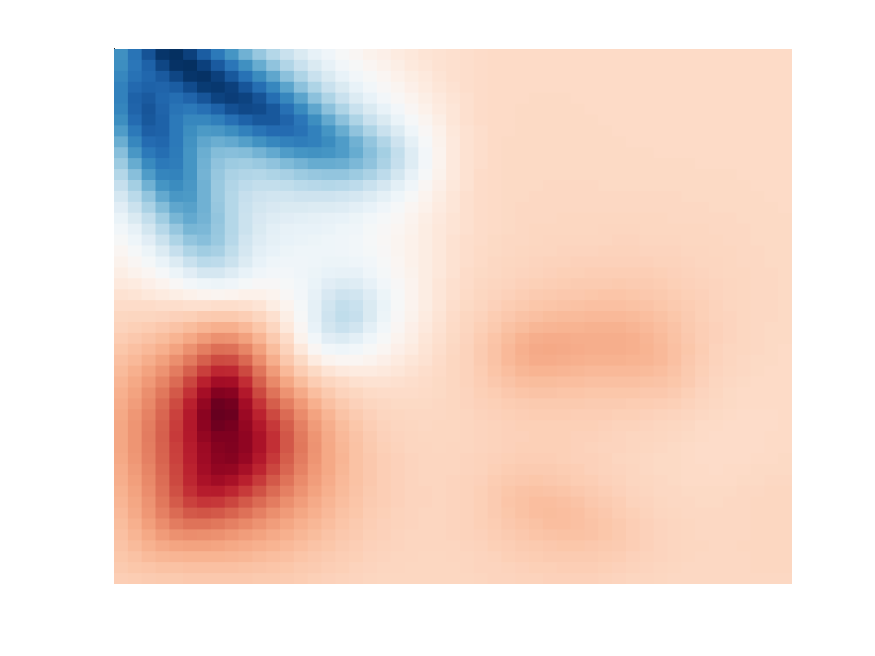}
			\put(60,13){\color[rgb]{0,0,0}{$t=15s$}} 
		\end{overpic}
		\begin{overpic}[width=4cm,height=3.5cm,trim=1.9cm 0 1.6cm 0.7cm, clip]{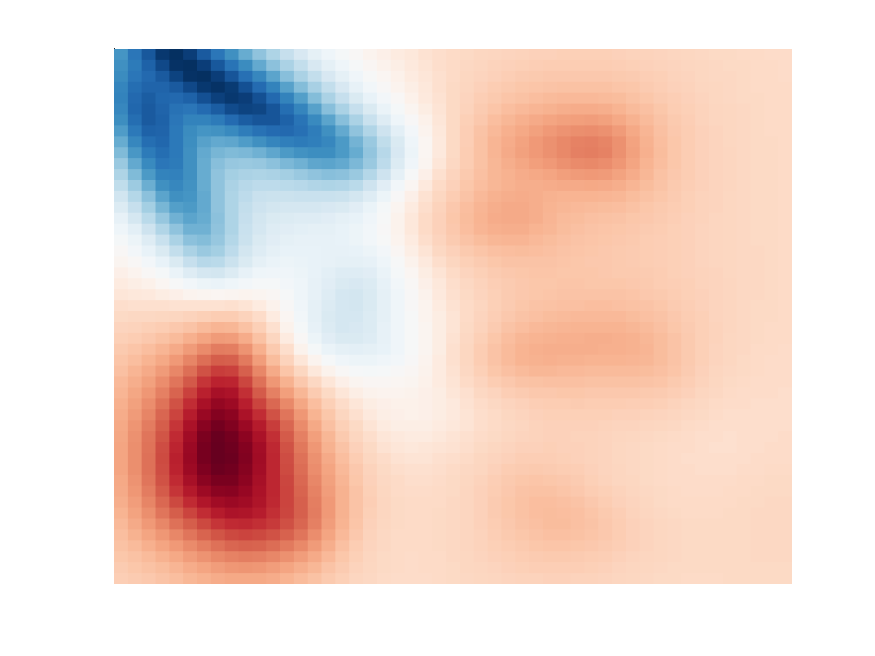}
			\put(60,13){\color[rgb]{0,0,0}{$t=25s$}} 
		\end{overpic}
		\begin{overpic}[width=4.5cm,height=3.5cm,trim=1.9cm 0 1.2cm 0.7cm, clip]{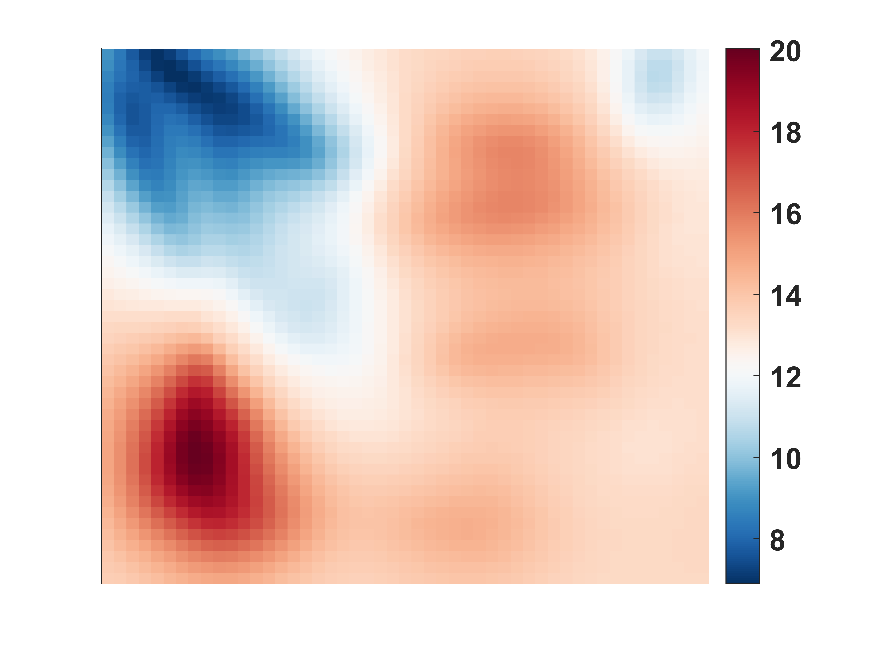}
			\put(53,12){\color[rgb]{0,0,0}{$t=35s$}} 
                \put(87,2){\color[rgb]{0,0,0}{(lx)}} 
		\end{overpic}
	}	
	\caption{A team of 5 robots cooperatively construct the online map of the light fields. (a) Experiment snapshots and the robot number. (b) Distributed online GP variance predictions for Robot $3$ and the exploration trajectories of all robots. The black points denote the sparse subsets for GP predictions. (c) Distributed online GP mean predictions for Robot $3$, which gradually converges to the real light field in (a).}
	\label{fig:field}
\end{figure*}

\begin{figure} [h]
	\centering
	\subfloat[Local GP variance $\bSigma_{3}^{[L]}$]{\includegraphics[width=4.2cm,height=3.2cm,trim=1.9cm 0 1cm 0.7cm, clip]{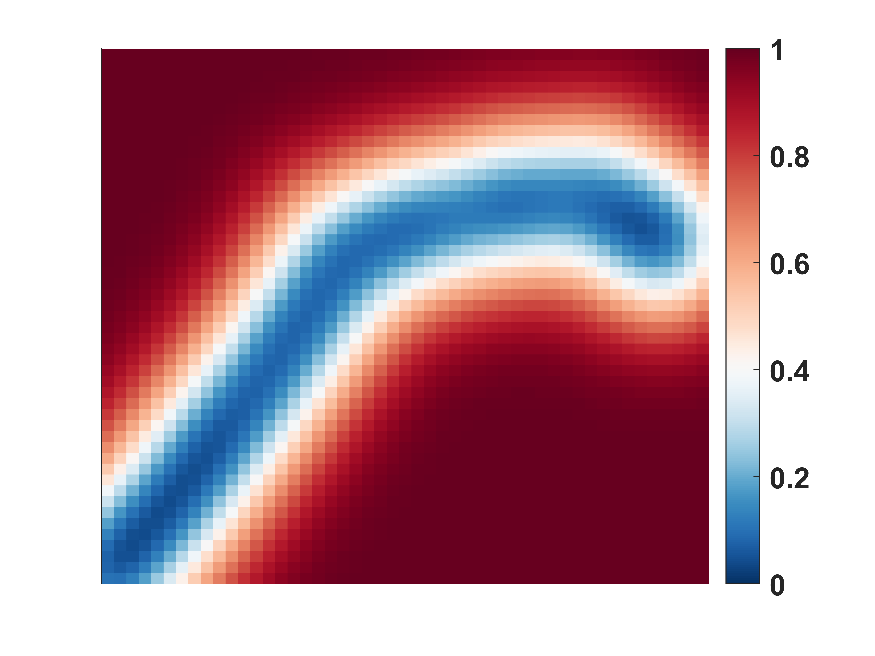}
	}
	\subfloat[Local GP mean $\bmu_{3}^{[L]}$]{\includegraphics[width=4.2cm,height=3.2cm,trim=1.8cm 0 1.2cm 0.7cm, clip]{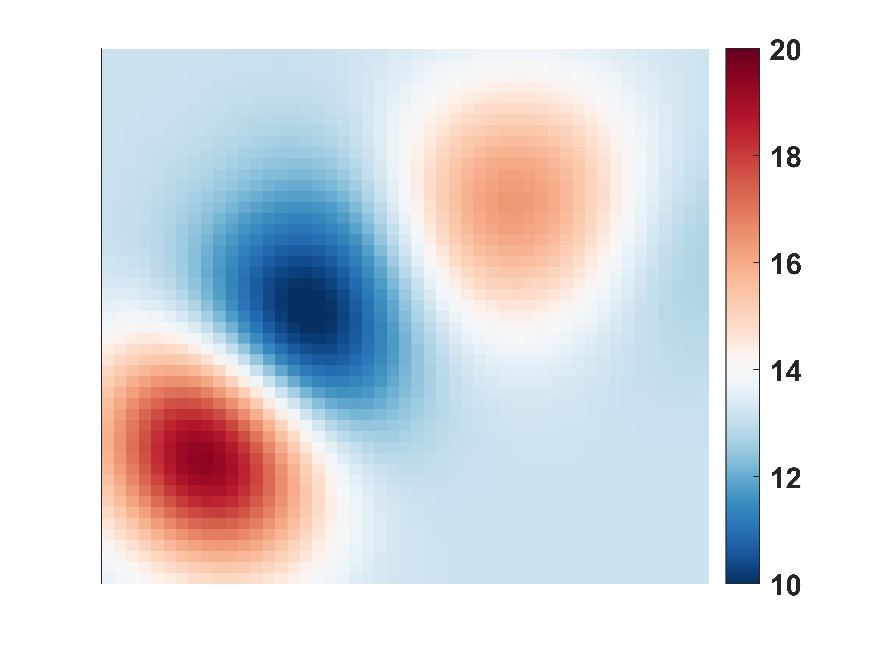}
	}
	\caption{Local Gaussian process mapping results of Robot $3$.  Compared with Fig.~\ref{fig:field}, the distributed aggregation methods efficiently construct light field maps for unvisited areas.}
	\label{fig:Local results}
\end{figure}
	
\subsection{Sparse prediction accuracy validation}
To analyze the map accuracy with different GP prediction methods, $10 \times 10$ equal interval points in the workspace are sampled by hand as the test points. Fig.~\ref{fig:performance}(a) shows the mean RMSE of the distributed predictions on the test points for all robots. Compared with the local data compression \cite{kepler_approach_2020}, the proposed distributed compression improves the prediction accuracy. Fig.~\ref{fig:performance}(b) shows the mean online prediction computational time for all robots on $20$ repeated experiments. The proposed sparse GP has an efficient constant prediction time with the increasing sample $N$. \red{In addition, the data compression is run all the time here to illustrate the constant computation time. In practice, it is not necessary when the sample number $N$ is small.}

\begin{figure}
	\centering
	\subfloat[RMSE]{
        \includegraphics[width=0.7cm,height=4cm,trim=0 0 0 0.5cm, clip]{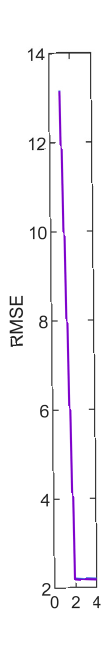}
        \includegraphics[width=5.2cm,height=4cm,trim=1.3cm 0 0 0.5cm, clip]{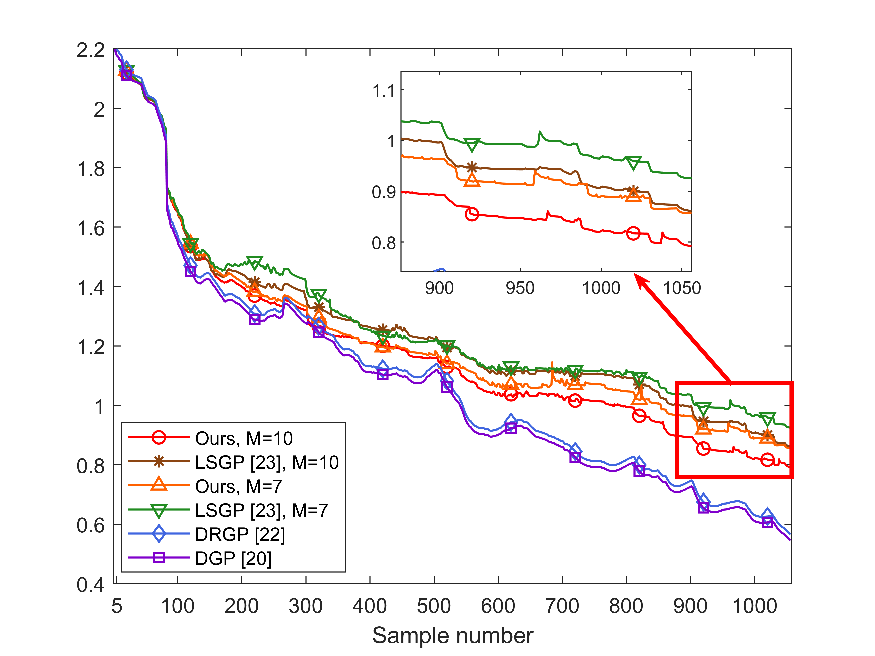}
	}
	\\
	\subfloat[Prediction Time]{
		\includegraphics[width=6cm,height=4cm,trim=0 0 0 0.6cm, clip]{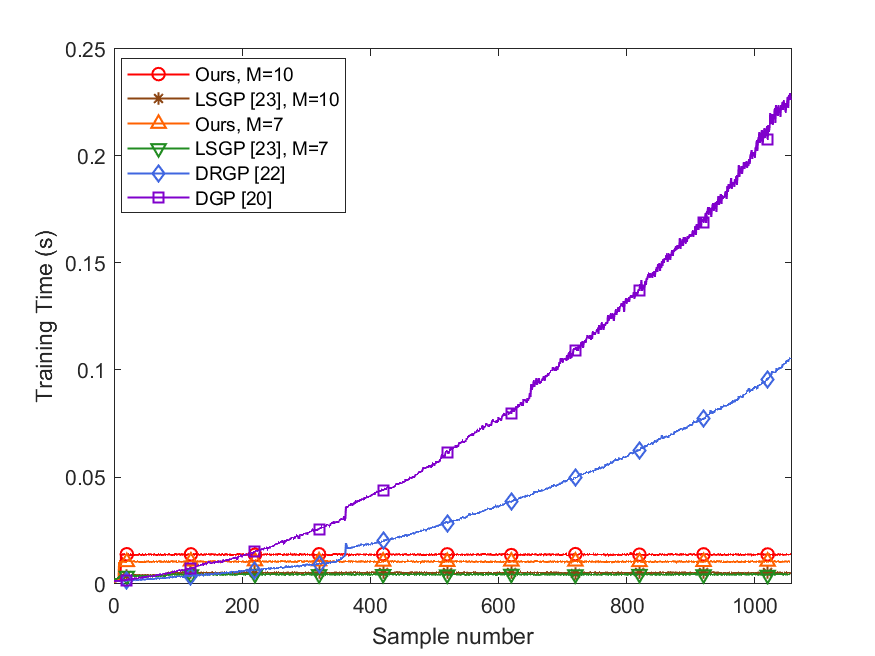}
	}
	\caption{Performance of the distributed GP \cite{lederer_cooperative_2023}, the distributed recursive GP \cite{jang_multi-robot_2020}, the local sparse GP \cite{kepler_approach_2020} and the proposed Algorithm~1 in this experiment. The proposed algorithm has an efficient constant prediction time with better global accuracy.}
	\label{fig:performance}
\end{figure}

\section{CONCLUSIONS}

This letter proposes a resource-efficient cooperative online field mapping method via distributed sparse Gaussian process regression. Novel distributed online sparse metrics are developed such that robots can cooperatively evaluate observations with better global accuracy. The bounded errors of distributed aggregation results are guaranteed theoretically, and performances of proposed algorithms are validated by online light field mapping experiments. Our future work will focus on the distributed sparse GP hyper-parameters training and variance-based sample path planning.

\section*{APPENDIX: proof of Theorem~1}
	Without loss of generality, we consider the test data $\bxx^*$ is a point here such that $\bmu(\bxx^*) \in \mathbb{R},~ \bSigma(\bxx^*) \in \mathbb{R}$. For simplicity, we omit the test points $(\bxx^*)$ in the following Gaussian process descriptions.
	We first derive the bounds of the reference input $\Vert \Delta\br_{i,t} \Vert$ in \eqref{eq:DGPR}.
	\begin{equation} \label{dr_1}
        \begin{aligned}
            \Delta\br_{i,t+1} &= \begin{bmatrix}
			\Delta\br_{i,t+1}^{[1]} \\   \Delta\br_{i,t+1}^{[2]} 
		\end{bmatrix} \\ &= \begin{bmatrix}
			\bmu_{i,t+1}^{[L]} \cdot  (\hat{\bSigma}_{i,t+1}^{[L]})^{-1} - \bmu_{i,t}^{[L]} \cdot (\hat{\bSigma}_{i,t}^{[L]})^{-1} \\   (\hat{\bSigma}_{i,t+1}^{[L]})^{-1} -  (\hat{\bSigma}_{i,t}^{[L]})^{-1} 
		\end{bmatrix}.
        \end{aligned}
	\end{equation}
	According to Ref. \cite{GPForML} and the SE kernel \eqref{eq:kernel}, the local Gaussian posterior variances have the bounds $0<\bSigma_{i,t}\leq \sigma_f^2$. Therefore, we derive
	\begin{equation}
			\sigma_n^2 \leq \hat{\bSigma}_{i,t+1}^{[L]} < \sigma_n^2+\sigma_f^2,
	\end{equation}
	Denote $K_*(\bxx_{t+1}) := K(\bxx^*,\bxx_{t+1})$ and $K_t(\bxx_{t+1}) := K(\bx_t,\bxx_{t+1})$. The recursive updates of $\balpha_{t+1}$ and $\bC_{t+1}$ \eqref{re_update} can be rewritten as
	\begin{equation} 
		\begin{aligned}
			&\balpha_{t+1} = \begin{bmatrix}	\balpha_t + q_{t+1}\bC_tK_t(\bxx_{t+1}) \\ q_{t+1}	\end{bmatrix}, \\
			&\bC_{t+1} = \\ &\begin{bmatrix}	\bC_t + r_{t+1}\bC_tK_t(\bxx_{t+1})K_t^T(\bxx_{t+1}) \bC_t^T  & r_{t+1}\bC_tK_t(\bxx_{t+1}) \\  r_{t+1}K_t^T(\bxx_{t+1}) \bC_t^T & r_{t+1}	\end{bmatrix} , 
		\end{aligned}
	\end{equation}
	The Gram function has $K_*(\bx_{t+1}) = [K_*(\bx_{t}), K_*(\bxx_{t+1})]$. Then, according to \eqref{RLGPR}, we obtain
	\begin{equation}
		\begin{aligned}
			&\bmu_{i,t+1}^{[L]} - \bmu_{i,t}^{[L]} = \\ 
			&[K_*(\bx_{t}), K_*(\bxx_{t+1})] \begin{bmatrix}	\balpha_t + q_{t+1}\bC_tK_t(\bxx_{t+1}) \\ q_{t+1}	\end{bmatrix} -K_*(\bx_t)\balpha_t \\
			&=q_{t+1} \Big(K_*(\bx_t)\bC_tK_t(\bxx_{t+1}) + K_*(\bxx_{t+1}) \Big).
		\end{aligned}
	\end{equation}
	Similarly,
	\begin{equation}
	\hat{\bSigma}_{i,t+1}^{[L]} - \hat{\bSigma}_{i,t}^{[L]} = r_{t+1} \Big(K_*(\bx_t)\bC_tK_t(\bxx_{t+1}) + K_*(\bxx_{t+1}) \Big)^2.
	\end{equation}
	Then, $\Delta\br_{i,t+1}^{[1]}$ \eqref{dr_1} can be rewritten as
	\begin{equation}
		\begin{aligned}
			&\Delta\br_{i,t+1}^{[1]} = \frac{\bmu_{i,t+1}^{[L]} \cdot \hat{\bSigma}_{i,t}^{[L]} - \bmu_{i,t}^{[L]} \cdot \hat{\bSigma}_{i,t+1}^{[L]}}{\hat{\bSigma}_{i,t+1}^{[L]} \cdot \hat{\bSigma}_{i,t}^{[L]}} \\
			&=\frac{ \big(\bmu_{i,t+1}^{[L]} - \bmu_{i,t}^{[L]} \big) \cdot \hat{\bSigma}_{i,t}^{[L]} - \bmu_{i,t}^{[L]} \cdot \big( \hat{\bSigma}_{i,t+1}^{[L]} - \hat{\bSigma}_{i,t}^{[L]} \big) }{\hat{\bSigma}_{i,t+1}^{[L]} \cdot \hat{\bSigma}_{i,t}^{[L]}}. \\
		\end{aligned}
	\end{equation}
	Since the only different point between time $t$ and $t+1$ is the $\bxx_{t+1}$, we have
	\begin{equation}
		\begin{aligned}
			&\mathop{\max}_t \left\lvert \bmu_{i,t+1} - \bmu_{i,t} \right\rvert = \frac{\sigma_f^2}{\sigma_e^2+\sigma_f^2} \left\lvert y_{t+1} \right\rvert, \\
			&\mathop{\max}_t \left\lvert \bSigma_{i,t} - \bSigma_{i,t+1} \right\rvert = \frac{\sigma_f^4}{\sigma_e^2+\sigma_f^2}.
		\end{aligned}
	\end{equation}
	which is obtained when $\bxx_{t+1} = \bxx^*$ and $\Vert \bxx_k - \bxx^* \Vert \rightarrow \infty,~k = 1,...,t$. Suppose that the Assumption~\ref{Sup:obs_bound} holds, we derive
	\begin{equation}
		\begin{aligned}
			\vert \Delta\br_{i,t+1}^{[1]} \vert  \leq \frac{\bar{y} \sigma_f^2} {\sigma_n^2(\sigma_e^2+\sigma_f^2)} + \frac{\bar{\mu} \sigma_f^4}{\sigma_n^2(\sigma_n^2+\sigma_f^2)(\sigma_e^2+\sigma_f^2)} = \delta_1.
		\end{aligned}
	\end{equation}
	Similarly, $\Delta\br_{i,t+1}^{[2]}$ is bounded as
	\begin{equation}
		\begin{aligned}
			\Delta\br_{i,t+1}^{[2]} = \frac{\hat{\bSigma}_{i,t}^{[L]} - \hat{\bSigma}_{i,t+1}^{[L]} }{\hat{\bSigma}_{i,t+1}^{[L]} \cdot \hat{\bSigma}_{i,t}^{[L]} } \leq \frac{ \sigma_f^4}{\sigma_n^2(\sigma_n^2+\sigma_f^2)(\sigma_e^2+\sigma_f^2)} = \delta_2.
		\end{aligned}
	\end{equation}
	Define $\delta_{\bxi_1},~\delta_{\bxi_2}$ as the consensus errors, i.e., 
	\begin{equation}
		\begin{aligned}
			&\bxi_{i,t+1}^{[1]} = \frac{1}{p} \sum_{i=1}^p \bmu_{i,t}^{[L]} \cdot (\hat{\bSigma}_{i,t}^{[L]})^{-1} + \delta_{\bxi_1}, \\
			&\bxi_{i,t+1}^{[2]} = \frac{1}{p} \sum_{i=1}^p (\hat{\bSigma}_{i,t}^{[L]})^{-1} + \delta_{\bxi_2},
		\end{aligned}
	\end{equation}
	According to the convergence of the FODAC algorithms (\textit{Theorem 3.1 in \cite{zhu_discrete-time_2010}}), suppose the Assumptions~\ref{Sup:PSC},~\ref{Sup:Non-degeneracy} hold, the consensus error $\delta_{\bxi_1},~\delta_{\bxi_2}$ is bounded as,
	\begin{equation}
		\begin{aligned}
			\lvert \delta_{\bxi_1} \rvert &
			 \leq \frac{4(pB-1)\delta_1}{\varphi^{0.5p(p+1)B-1}}, \\
			\lvert \delta_{\bxi_2} \rvert &\leq \frac{4(pB-1)\delta_2}{\varphi^{0.5p(p+1)B-1}}
		\end{aligned}
	\end{equation}
	when time $t \rightarrow \infty$.
	
Denote $\eta = \frac{4(pB-1)}{\varphi^{0.5p(p+1)B-1}}$,~ 
$\zeta_{\bmu} = \frac{1}{p} \sum_{i=1}^p \bmu_{i,t}^{[L]} \cdot (\hat{\bSigma}_{i,t}^{[L]})^{-1}$, 
$\zeta_{\bSigma} = \frac{1}{p} \sum_{i=1}^p (\hat{\bSigma}_{i,t}^{[L]})^{-1}$. 
Let $\lvert \delta_{\bxi_2} \rvert \leq \eta \delta_2 \leq \frac{1}{\sigma_n^2+\sigma_f^2} \leq \zeta_{\bSigma}$, i.e., the $\sigma_n^2$ 
satisfies $\sigma_n^2 \geq \frac{\eta \sigma_f^4}{\sigma_e^2+\sigma_f^2}$ such that we have $\zeta_{\bSigma} + \delta_{\bxi_2} > 0$ for avoiding singularity.
The difference between distributed the aggregation mean $\bmu_{i,t}^{[D]}$ and the PoE results \eqref{agg_result} can be written as,
	\begin{equation}
		\begin{aligned}
			&\lvert \bmu_{i,t}^{[D]} - \tilde{\bmu} \rvert \leq \left\lvert \frac{\zeta_{\bmu} + \delta_{\bxi_1}}{\zeta_{\bSigma} + \delta_{\bxi_2}} - \frac{\zeta_{\bmu}}{\zeta_{\bSigma}} \right\rvert \leq \left\lvert \frac{ \zeta_{\bSigma} \delta_{\bxi_1} - \zeta_{\bmu}\delta_{\bxi_2}}{ \zeta_{\bSigma} \left(\zeta_{\bSigma} + \delta_{\bxi_2}\right)  } \right\rvert \\
			& \leq  \left\lvert \frac{\delta_{\bxi_1}}{\zeta_{\bSigma} +\delta_{\bxi_2}} - \tilde{\bmu}(1-\frac{\zeta_{\bSigma}}{\zeta_{\bSigma} + \delta_{\bxi_2}})  \right\rvert \\
			& \leq \frac{\eta \delta_1 (\sigma_n^2 + \sigma_f^2)}{1 + \eta \delta_2 (\sigma_n^2 + \sigma_f^2)} + \left\lvert \frac{ \eta \delta_2 (\sigma_n^2 + \sigma_f^2)}{1 - \eta \delta_2 (\sigma_n^2 + \sigma_f^2)} \right\rvert \cdot \tilde{\bmu} 
		\end{aligned}
	\end{equation}
	Furthermore, if there is a constant $h>0$, $\Delta\br_{i,t+1} = \mathbf{0}$ for any time $t>h$ holds, 
	according to the Corollary 3.1 in \cite{zhu_discrete-time_2010}, $\delta_{\bxi_1}$ and $\delta_{\bxi_2}$ will converge to zero. Then, we have
	$\bmu_{i,t}^{[D]}(\bxx^*) \rightarrow \tilde{\bmu}(\bxx^*) ,~\forall i \in V,~ \bxx^* \in \mX$ as time $t \rightarrow \infty$.



\bibliography{IEEEabrv, sample}
\bibliographystyle{IEEEtran}

\end{document}